\newcommand{\pred}[1]{\left\llbracket #1 \right\rrbracket}
\newcommand{\R}{\mathbb{R}}
\newcommand{\N}{\mathbb{N}}
\newcommand{\paren}[1]{\left( #1 \right)}
\newcommand{\beq}{\begin{eqnarray*}}
\newcommand{\eeq}{\end{eqnarray*}}
\newcommand{\beqn}{\begin{eqnarray}}
\newcommand{\eeqn}{\end{eqnarray}}
\newcommand{\ben}{\begin{enumerate}}
\newcommand{\een}{\end{enumerate}}
\newcommand{\bit}{\begin{itemize}}
\newcommand{\eit}{\end{itemize}}
\newcommand{\hide}[1]{}
\newcommand{\oo}[1]{\frac{1}{#1}}
\newcommand{\eps}{\epsilon}
\newcommand{\argmax}{\mathop{\mathrm{argmax}}}
\newcommand{\inv}{^{-1}} %
\newcommand{\chr}{\boldsymbol{\mathbbm{1}}} %
\newcommand{\rad}{\operatorname{rad}}
\newcommand{\gn}{\, | \,}
\newcommand{\calM}{\mathcal{M}}
\newcommand{\M}{\calM}
\newcommand{\calN}{\mathcal{N}}
\newcommand{\calX}{\mathcal{X}}
\newcommand{\X}{\calX}
\newcommand{\Y}{\mathcal{Y}}
\newcommand{\F}{\mathcal{F}}
\newcommand{\marg}{\operatorname{marg}}
\newcommand{\dist}{\rho} %
\newcommand{\ds}{\displaystyle}
\newcommand{\bepf}{\begin{proof}}
\newcommand{\enpf}{\end{proof}}
\newcommand{\sign}{\operatorname{sign}}
\newcommand{\ddim}{\operatorname{ddim}}
\newcommand{\dens}{\operatorname{dens}}
\newcommand{\err}{\operatorname{err}}
\newcommand{\serr}{\widehat{\operatorname{err}}}
\newcommand{\tips}{\tilde\eps}
\newcommand{\bin}{\operatorname{Bin}}
\renewcommand{\P}{\mathbb{P}}
\def\eps{\varepsilon}
\renewcommand{\chr}{\boldsymbol{\mathbbm{1}}} %
\renewcommand{\pred}[1]{\chr_{\left\{ #1 \right\}}}
\newcommand{\set}[1]{\left\{ #1 \right\}}
\newcommand{\tsset}[1]{\{ #1 \}}
\newtheorem{theorem}{Theorem}
\newtheorem{lemma}[theorem]{Lemma}
\title{
Nearly optimal classification for semimetrics
}
\author[*]{Lee-Ad Gottlieb}
\author[**]{Aryeh Kontorovich}
\affil[*]{Department of Computer Science, Ariel University}
\affil[**]{Department of Computer Science, Ben-Gurion University}
\begin{document}
\maketitle

\begin{abstract}
We initiate the rigorous study of classification in semimetric spaces,
which are
point sets with a distance function 
that is non-negative and symmetric, but
need not satisfy the triangle inequality.
For metric spaces,
the doubling dimension essentially characterizes
both the runtime and sample complexity of classification algorithms
---
yet we show that this is not the case for semimetrics. 
Instead, we define the {\em density dimension} and
discover that it plays a central role in
the statistical and algorithmic feasibility of learning in semimetric spaces.
We present nearly optimal sample compression algorithms
and use these to obtain generalization guarantees, including fast rates.
The latter hold for general sample compression schemes and may be of independent interest.
\end{abstract}

\section{Introduction}
\label{sec:intro}
The problem of learning in non-metric spaces has been of significant recent interest,
being the subject of a 2010 COLT workshop and a central topic of all three SIMBAD 
conferences. In this paper, we 
initiate the study of efficient statistical learning in
{\em semimetric} spaces, which are
point sets endowed with a distance function that 
is non-negative and symmetric but may not satisfy
the triangle inequality \citep{wilson-semi-metric}\footnote{
Some authors use the term {``semimetric''} to mean
{\em pseudometrics}.
These
preserve much of the structure of metrics,
the only difference being
that they allow distinct points to have distance $0$.
Our usage appears to be the standard one.
}.
Without the latter, quite a bit of structure is lost
--- for example,
semimetric spaces admit convergent sequences without a Cauchy subsequence \citep{burke1972}.
We are not aware of any rigorous learning results in semimetric spaces prior to this work.

\paragraph{Background and motivation.}
Much of the existing machinery for classification algorithms, as well as
generalization bounds, depends strongly on the data residing in a Hilbert space. 
For some important applications, this structural
constraint severely limits the applicability of existing methods.
Indeed, it is often the case that the data is naturally endowed with some
metric strongly dissimilar to the familiar Euclidean norm.

Consider images, for example.
Although these can be naively represented as coordinate-vectors in
$\R^d$, the Euclidean (or even $\ell_p$) distance between the representative
vectors does not correspond well to the one perceived
by human vision. Instead, the earthmover distance
is commonly used in vision applications
\citep{DBLP:journals/ijcv/RubnerTG00}.
Yet representing earthmover distances using any fixed $\ell_p$ norm
unavoidably introduces very large inter-point distortion \citep{NS07},
potentially corrupting the data geometry before the learning process has even begun.
Nor is this issue mitigated by kernelization, as kernels
necessarily embed the data in a Hilbert space, again incurring the aforementioned distortion. 
A similar issue arises
for strings: These can be naively treated as
vectors endowed with different $\ell_p$ metrics,
but a much more natural metric over strings is the edit distance, which is
similarly known to be strongly non-Euclidean \citep{AK10}.
Additional limitations of kernel methods are articulated in \citet{DBLP:journals/ml/BalcanBS08}.

These concerns have led researchers to seek out algorithmic and statistical
approaches that apply in greater generality.
A particularly fruitful recent direction has focused on metric spaces.
Metric spaces are point sets endowed with a distance function that is
non-negative and symmetric, and also satisfies the triangle equality.
Since metric spaces may be highly complex --- for example, they 
include infinite-dimensional Hilbert spaces ---
the discussion is typically restricted to 
metric spaces with bounded 
{\em intrinsic dimension}.
The latter
may be
formalized, e.g., via metric entropy numbers or
the
doubling dimension.
This paradigm captures some natural distance metrics, such as earthmover
and edit distances \citep{DBLP:journals/tit/GottliebKK14}.

\begin{sloppypar}
Assuming no additional structure beyond inter-point distances,
one is left (almost tautologically) with proximity-based methods
---
and
all the learning algorithms considered in this paper will be
variants of the Nearest Neighbor classifier.
For metric spaces,
it is known that a sample of size exponential in the doubling dimension 
($\ddim$) suffices to
achieve low generalization error \citep{DBLP:journals/jmlr/LuxburgB04,DBLP:conf/colt/GottliebKK10,shwartz2014understanding,DBLP:conf/icml/KontorovichW14},
and that exponential dependence on $\ddim$ is in general unavoidable \citep{shwartz2014understanding}.
As for algorithmic runtimes, the naive nearest-neighbor classifier
evaluates queries in $O(n)$ time (where $n$ is the sample size);
however, an approximate nearest neighbor can be found in time $2^{O(\ddim)} \log n$.
If one desires runtimes depending not on $n$ but on the geometry (say, margin $\gamma$)
of the data, one may achieve a sample compression scheme of size $\gamma^{-O(\ddim)}$,
and it is NP-hard to achieve a significantly better compression \citep{DBLP:conf/nips/GottliebKN14}.
Hence, the doubling dimension in some sense characterizes the statistical 
and computational difficulty of learning in metric spaces.
We note that all learning bounds and algorithms for doubling spaces 
rely on
the packing property 
for 
these
spaces (Lemma~\ref{lem:metric-pack}), which 
upper-bounds the size of
a point set 
whose inter-point distance is bounded from below.
\end{sloppypar}

\begin{sloppypar}
While metric spaces are significantly more general than Hilbertian ones, 
they still do not capture many common distance functions used by practitioners.
These non-metric distances include
the Jensen-Shannon divergence, which appears in statistical applications
\citep{fuglede04,DBLP:conf/nips/GoodfellowPMXWOCB14},
$k$-median Hausdorff distances and $\ell_p$ distances with $0<p<1$,
which appear in vision applications \citep{jain94,DBLP:journals/pami/JacobsWG00}
--- all of which are
semimetrics. 
An additional line of work by
\citet{jain94}
and
\citet{DBLP:journals/pami/JacobsWG00,DBLP:conf/iccv/JacobsWG98,DBLP:conf/nips/WeinshallJG98}
underscored
the 
effectiveness of non-metric distances in various applications (mainly vision),
and among these, semimetrics again play a prominent role
\citep{514678,546971,817410,232073,Jain:1997:RRH:273392.273401,790412}.
\end{sloppypar}

\paragraph{Main results.}
We initiate the rigorous study of classification for semimetric spaces.

\begin{sloppypar}
Our first contribution is a fundamental insight into semimetric spaces.
Unlike in metric spaces, where the covering numbers $\calN(\cdot)$ and the packing numbers $\calM(\cdot)$
are related via $\calM(2\eps)\le\calN(\eps)\le\calM(\eps)$
(see e.g., \citet{alon97scalesensitive}),
violating the triangle inequality breaks this
connection between covering and packing. 
Particularly, for semimetrics, a 
doubling constant (while well-defined) does not imply a packing property
(Lemma~\ref{lem:semi-pack}). As a consequence, the bounds in the host of results constituting
the theory of learning in doubling metric spaces are not applicable to semimetrics.
Crucially, however, we show that semimetrics with a finite {\em density constant} do
obey a packing property (Lemma~\ref{lem:semi-pack}), 
and so the 
latter serves as a natural basis for
statistical and algorithmic bounds for classification in 
these spaces.
This insight is developed further in Lemma~\ref{lem:metric-semimetric}: 
While for metric spaces the doubling and density
constants are never very far apart, in semimetric spaces the gap may be arbitrarily large.
\end{sloppypar}

However, the above
discussion does
not imply that
learning results for metric spaces are automatically portable
into semimetrics simply by replacing
the doubling constant by the density constant.
For example, 
although
the nearest-neighbor classifier is still well-defined in semimetric spaces,
and may naively be evaluated on queries in $O(n)$ time,
relaxing to approximate nearest neighbors no longer provides the exponential
speedup that it does in metric spaces (Lemma~\ref{lem:semi-nns}).
Simply put, without the triangle inequality, the hierarchy-based search methods, 
such as \citet{BKL06,DBLP:conf/colt/GottliebKK10} and related approaches,
all break down.

Fortunately, there is a technique that
survives violations of the triangle inequality --- namely,
sample compression.
The latter is
achieved by extracting
a $\gamma$-net,
where $\gamma$ is the sample margin (Theorem~\ref{thm:consistent}).
This can be done in runtime
$
\min\set{n^2,n\paren{1/\gamma}^{O(\dens)}}
$,
where $\dens$ is the density dimension defined in (\ref{eq:mudef});
this
is worse than the corresponding state of the art for metric spaces (Lemma~\ref{lem:semi-net}).
The net-extraction procedure in effect compresses the sample from size
$n$ to $(1/\gamma)^{O(\dens)}$,
which is nearly optimal
unless P=NP
(Theorem~\ref{thm:hard}).

On the statistical front, we give a compression-based
generalization bound
that smoothly interpolates between the consistent $\tilde O(1/n)$
and agnostic $\tilde O(1/\sqrt n)$ decay regimes (Theorem~\ref{thm:gen-fast}).
This ``fast rate'' holds for general compression schemes and may be of independent interest.
Applied to margin-based semimetric sample-compression schemes, 
it yields the 
bound in Theorem~\ref{thm:gen-sep},
which is amenable to efficient 
Structural Risk Minimization
(Theorem \ref{thm:main-alg})
and
cannot be 
substantially
improved unless P=NP (Theorem~\ref{thm:hard}).
The lower bound in Theorem~\ref{thm:sample-lb}
shows that 
even under margin assumptions, there exist adversarial distributions
forcing
the sample complexity to
be exponential in $\dens$.

\paragraph{Related work.}
\begin{sloppypar}
In a 
series of papers,
\citet{DBLP:conf/icml/BalcanB06,DBLP:conf/stoc/BalcanBV08,DBLP:conf/colt/BalcanBS08,DBLP:journals/ml/BalcanBS08}
developed a theory of learning with similarity functions, which resemble kernels but relax the requirement of
being positive definite. Learning is accomplished by embedding the data into an appropriate Euclidean space
and performing large-margin separation. Hence, this approach effectively extracts the implicit Euclidean structure
encoded in the similarity function, but does not seem well-suited for inherently non-Euclidean data.
\citet{DBLP:conf/icml/WangYF07} extended this framework to dissimilarity functions, obtaining analogous results.
\end{sloppypar}

\section{Preliminaries}
\label{sec:prelim}
\paragraph{Semimetric spaces.} 
Throughout this paper, our instance space $\X$ will be endowed with a semimetric $\dist:\X\times\X\to[0,\infty)$,
which is a non-negative symmetric function verifying $\dist(x,x')=0\iff x=x'$ for all $x,x'\in\X$.
If the semimetric space $(\X,\dist)$
additionally satisfies the triangle inequality, $\dist(x,x')\le\dist(x,x'')+\dist(x'',x')$
for all $x,x',x''\in\X$, then $\dist$ is a {\em metric}. 
The distance between two sets 
$A,B$ 
in a semimetric space
is defined by 
${\ds \dist(A,B)=\inf_{x\in A,x'\in B}\dist(x,x')}$.
For $x\in\X$ and $r>0$, denote by 
$B_r(x)=\set{y\in\X:\dist(x,y)< r}$
the open $r$-ball about $x$.
The {\em radius} of a set is the radius of the smallest ball containing it:
$\rad(A)=\inf\set{r>0: \exists x\in A, A\subseteq B_r(x)}$.

\paragraph{Doubling and density constants.} 
Let $\lambda = \lambda(\X)$ be the smallest number such that every
open ball in $\X$ can be covered by $\lambda$ open balls of half the 
radius, where all balls are centered at points of $\X$. Formally,
$$
\lambda(\X) = 
{
\min\tsset{\lambda\in\N:
\forall x\in\X,r>0 ~\exists x_1,\ldots,x_\lambda\in\X:
B_r(x) \subseteq \cup_{i=1}^\lambda B_{r/2}(x_i)
}
}
.
$$
Then $\lambda$ is the {\em doubling constant} of $\X$,
and the {\em doubling dimension} of $\X$ is $\ddim(\X)=\log_2\lambda$. 

An {\em $r$-net} of a set $A\subseteq \X$ is any {\em maximal} subset
$A$ having mutual inter-point distance at least $r$.
The $r$-packing number $\M(r,A)$ of $A$ is the maximum 
size of any $r$-net of $A$:
\beqn
\label{eq:pack-def}
\M(r,A) = \max\set{ |E| : E\subseteq A, 
(x,y\in E)\wedge(x\neq y)
\implies\dist(x,y) \ge r}
.
\eeqn

\citet{GK-13} defined 
the {\em density constant} $\mu(\X)$ as
the smallest number such that any open $r$-radius ball in $\X$ 
contains at most $\mu$ points at mutual inter-point distance 
at least ${r}/{2}$:
\beqn
\label{eq:mudef}
\mu(\X) = \min\set{\mu\in\N:
(x\in\X)\wedge(r>0)\implies \M\paren{\frac r2,B_r(x)} \le \mu
},
\eeqn
and 
we define
the {\em density dimension} of $\X$
by
$\dens(\X)=\log_2\mu(\X)$.

\begin{sloppypar}
\paragraph{Learning model.}
We work in the standard {\em agnostic} learning model 
\citep{mohri-book2012,shwartz2014understanding},
whereby the learner receives a sample $S$ consisting of $n$ labeled examples
$(X_i,Y_i)$, drawn iid from an unknown distribution 
over $\X\times\set{-1,1}$. 
All subsequent probabilities and expectations will be with respect to 
this distribution.
Based on the training sample $S$,
the learner produces a {\em hypothesis} $h:\X\to\set{-1
,1}$,
whose {\em empirical error} is defined by $\serr(h)=n\inv\sum_{i=1}^n\pred{h(X_i)\neq Y_i}$
and whose {\em generalization error} is defined by $\err(h)=\P(h(X)\neq Y)$.
\hide{
The Bayes-optimal classifier, $h\bay$, is defined by 
$$h\bay(x)=\argmax_{y\in\set{-1,1}}\P(Y=y\gn X=x)$$
and 
$$R^*:=\err(h\bay)=\inf\set{\err(h)},$$ where the infimum is over all measurable hypotheses.
A learning algorithm mapping a sample $S$ of size $n$ to a hypothesis $h_n$ is said to be
strongly Bayes consistent if $\err(h_n)\ninf R^*$ almost surely.
}
\end{sloppypar}

\paragraph{Sub-sample, margin, and induced $1$-NN.} 
In a slight abuse of notation, we will blur the distinction
between $S\subset\X$ as a collection of points in a semimetric space
and $S\in (\X\times\set{-1,1})^n$ as a sequence
of labeled examples.
Thus, the notion of a {\em sub-sample} $\tilde S\subset S$
partitioned 
into its positively and negatively labeled subsets as
$\tilde S=\tilde S_+\cup \tilde S_-$ is well-defined.
The {\em margin} of $\tilde S$, defined by
\beqn
\label{eq:margdef}
\marg(\tilde S)
=
\dist(\tilde S_+,\tilde S_-),
\eeqn
is the minimum distance between a pair
of opposite-labeled points
(see Fig.~\ref{fig:marg} in the Appendix).
In degenerate cases where one of $\tilde S_+,\tilde S_-$ is empty, $\marg(\tilde S)=\infty$.
A sub-sample $\tilde S$ naturally induces the $1$-NN classifier
$h_{\tilde S}$, via
\beqn
\label{eq:1-nn-def}
h_{\tilde S}(x) = \sign(\dist(x,\tilde S_-)-\dist(x,\tilde S_+)).
\eeqn

\begin{sloppypar}
The problem of {\em nearest-neighbor condensing} is to produce the
minimal subsample $\tilde{S} \subset S$ so that the $1$-NN
classifier $h_{\tilde S}$ is {\em consistent} with $S$, i.e.\
has zero training error.
This problem was considered by \citet{DBLP:conf/nips/GottliebKN14} in the context
of doubling metric spaces, 
where
they demonstrated that it is
NP-hard to find the minimal $\tilde{S}$, 
even approximately 
(within a factor
$2^{O( (\ddim(S) \log(2\rad(S)/\marg(S)))^{1-o(1)})}$
of $|\tilde{S}|$). This result translates immediately to the more 
general semimetric spaces.
\end{sloppypar}

\hide{
\paragraph{Separability.} 
For $k\in\N$ and $\gamma>0$,
we say that the sample $S$ is
$(k,\gamma)$-{\em separable}
if there is a sub-sample
$S'\subset S$
such that
$|S\setminus S'|\le k$
and
$\marg(S')\ge\gamma$.
\hide{
\bit
\item[(i)]
and
\item[(ii)]
every pair of opposite-labeled points in $\tilde S$
is at least $\gamma$ apart in distance.
\eit
}
}

\section{Metric vs. Semimetric spaces}
\label{sec:semimetric}
In this section, we consider the basic tools used in learning algorithms
for doubling metric spaces. We show that 
in semimetric spaces, low doubling dimension does not
imply a low packing number (Lemma~\ref{lem:semi-pack}). 
Hence, all learning algorithms developed
for metric spaces relying on the doubling dimension 
are no longer efficient in semimetric spaces.
We then show that a low density constant does imply a low packing number,
even for semimetric spaces.
An even more stark distinction is established:
in doubling metric spaces,
the doubling and density
constants are never very far apart, while
in semimetric spaces the gap may be arbitrarily large.

These results 
suggest that the semimetric density constant will play the role of the metric doubling constant.
This intuition is borne out
in some aspects (Lemma~\ref{lem:metric-pack})
and proves to be spurious
in others (Lemma~\ref{lem:semi-nns}).
When controlling for both constants, 
approximate nearest-neighbor search
in semimetric spaces
cannot be performed nearly as efficiently as in doubling metric spaces.

The results presented in this section serve as the theoretical basis
motivating our learning algorithms (Section \ref{sec:algs}).

\subsection{Doubling constant vs. the density constant}

The following lemma states the well-known packing property of doubling spaces
(see for example \citet{KL04}).
It is a basic component of all the $\ddim$-based proximity methods.
Note the use of the triangle inequality
in the proof.

\begin{lemma}\label{lem:metric-pack}
If $\X$ is a metric space and $C\subseteq\X$ has
minimum inter-point distance $b$, then
$|C| \le \left( {2\rad(\X)}/{b} \right)^{O(\ddim(\X))}$.
\end{lemma}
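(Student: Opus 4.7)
The plan is to iterate the doubling property $O(\log(\rad(\X)/b))$ times to cover $\X$ by balls of radius smaller than $b/2$, and then invoke the triangle inequality to argue that each such small ball contains at most one point of $C$.

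More concretely, let $R=\rad(\X)$, so that $\X$ is contained in some ball $B_R(x_0)$. By definition of the doubling constant $\lambda=\lambda(\X)$, this ball can be covered by $\lambda$ balls of radius $R/2$. Iteratively applying the doubling property to each of these sub-balls, after $k$ levels of refinement the set $\X$ is covered by at most $\lambda^k$ balls of radius $R/2^k$. Choosing $k=\lceil\log_2(2R/b)\rceil$ ensures that each of these balls has radius strictly less than $b/2$.

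Now I would use the triangle inequality — the only genuinely metric step — to show that any such radius-$(b/2)$ ball $B$ contains at most one point of $C$: if $x,y\in C\cap B$ with center $z$, then $\dist(x,y)\le\dist(x,z)+\dist(z,y)<b/2+b/2=b$, contradicting the minimum inter-point distance. Therefore
\[
|C|\;\le\;\lambda^{\lceil\log_2(2R/b)\rceil}\;\le\;\lambda\cdot(2R/b)^{\log_2\lambda}\;=\;(2R/b)^{O(\ddim(\X))},
\]
which is the claimed bound.

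There is no real obstacle here — the proof is a textbook application of the doubling property combined with the triangle inequality. The only point worth flagging is precisely the role of the triangle inequality in the final step: in a semimetric, two points inside a common radius-$r$ ball need not be within $2r$ of each other, which is exactly why this argument collapses in the semimetric setting and motivates the switch to the density constant developed in the sequel.
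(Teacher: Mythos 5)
Your proof is correct and follows the same route as the paper: iterate the doubling constant $O(\log(\rad(\X)/b))$ times to cover $\X$ by balls of radius under $b/2$, then invoke the triangle inequality to show that each such ball can contain at most one point of $C$. The only cosmetic difference is the phrasing of the triangle-inequality step (the paper says each $b/2$-ball is completely contained in some $b$-ball centered at a point of $C$, which is an equivalent formulation of your at-most-one-point argument), and the paper's exponent $\lambda(\X)^{O(\rad(\X)/b)}$ is evidently a typo for $\lambda(\X)^{O(\log(\rad(\X)/b))}$, which you state correctly.
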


\begin{proof}
$C$ can be covered by $|C|$ open balls of radius $b$
centered at the points of $C$.
By repeatedly applying the definition of the doubling constant,
$C$ (and in fact all of $\X$) can be covered by 
$k 
= \lambda(\X)^{O(\rad(\X)/b)} 
= \left( \frac{2\rad(\X)}{b} \right)^{O(\ddim(\X))}$
balls of radius $\frac{b}{2}$ centered at points of $\X$.
By the triangle inequality, each of these 
$\frac{b}{2}$-radius balls
is completely contained in some $b$-radius ball
centered at points of $C$, hence $|C| \le k$.
\end{proof}

The central contribution of this section is the following lemma.
It demonstrates that for semimetrics, a doubling property does
not imply a packing property (unlike for metrics, Lemma~\ref{lem:metric-pack}).
However, a finite density constant does imply a packing property.

\begin{lemma}\label{lem:semi-pack}
In semimetric spaces, the doubling constant does not imply a
packing property, while the density constant does. In particular,
\begin{enumerate}[(a)]
\item
\label{it:const-lambda-large-C}
There exist semimetric spaces $\X$ of arbitrary
cardinality
with a universally bounded doubling constant
$\lambda(\X)=O(1)$, such that $\X$ contains a
$\rad(\X)$-net $C$ of size $\Theta(|\X|)$.
\item
\label{it:dens=>small-C}
For any semimetric space $\X$ and $b>0$, the size of 
any $b$-net of $\X$ is 
$$\left( \frac{2\rad(\X)}{b} \right)^{O(\dens(\X))}.$$
\end{enumerate}
\end{lemma}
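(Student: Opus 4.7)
For part (a), I plan to exhibit an explicit semimetric space with both small doubling and large $\rad(\X)$-net. Take $\X = \{v, x_1, \ldots, x_n\}$ with $\dist(x_i, x_j) = 1$ for all $i \neq j$ and $\dist(v, x_i) = 2^{i-n-1}$, so the distances from $v$ form the geometric progression $2^{-n}, 2^{-n+1}, \ldots, 2^{-1}$. Since $\dist(v, x_i) + \dist(v, x_j) < 1 = \dist(x_i, x_j)$ for all $i \neq j$, this genuinely violates the triangle inequality. It is immediate that $\rad(\X) = 1/2$ (realized in the limit at $v$) and that $\{x_1, \ldots, x_n\}$ is a $\rad(\X)$-net of size $n = \Theta(|\X|)$, since these points have pairwise distance $1 \ge 1/2$. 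The non-trivial part will be to verify $\lambda(\X) \le 2$. The key observation is that every ball $B_r(v)$ has the form $\{v, x_1, \ldots, x_k\}$ for some $k$ determined by $r$, and the geometric spacing ensures $B_{r/2}(v) = \{v, x_1, \ldots, x_{k-1}\}$ contains exactly one fewer point; adjoining the singleton $B_{r/2}(x_k)$ yields a two-ball cover. Balls centered at $x_i$ will be handled analogously, using $v$ as an auxiliary covering center (so that $B_{r/2}(v)$ covers the ``smaller-index'' mass of $B_r(x_i)$). The main obstacle will be careful bookkeeping across the threshold radii $r = 2^{k-n-1}$ to confirm a two-ball cover in each regime.

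For part (b), I plan to adapt the standard iterated-packing argument from doubling metric spaces, now powered by the density constant instead of the doubling constant. Fix any $v \in \X$ and choose $R \in (\rad(\X), 2\rad(\X)]$ so that $\X \subseteq B_R(v)$; it then suffices to bound $\M(b, B_R(v))$, since $C \subseteq \X \subseteq B_R(v)$. I plan to iterate the following step: take a \emph{maximal} $R/2$-packing $P_1 \subseteq B_R(v)$; by the definition of $\mu(\X)$ in \eqref{eq:mudef}, $|P_1| \le \mu := \mu(\X)$, and by maximality $B_R(v) \subseteq \bigcup_{p \in P_1} B_{R/2}(p)$. Recursing the same step inside each $B_{R/2}(p)$, after $k := \lceil \log_2(R/b) \rceil$ rounds I obtain a cover of $B_R(v)$ by $\mu^k$ balls of radius at most $b$, each centered at a point of $\X$. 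A final application of the density constant shows that each such $b$-ball contains at most $\mu$ points of the $b$-packing $C$ (any $b$-packing is in particular a $b/2$-packing within a $b$-ball). Multiplying gives $|C| \le \mu^{k+1} = (2\rad(\X)/b)^{O(\dens(\X))}$, which is the claimed bound.
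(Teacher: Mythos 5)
Your proposal is correct and for part~(a) takes a genuinely different route from the paper. The paper's construction takes $\X = A \cup A'$, where $A$ is $n$ points on the line metric and $A'$ is a ``shadow'' copy with $\dist(a'_i,a_j)=\dist(a_i,a_j)+\phi\pred{i=j}$ but $\dist(a'_i,a'_j)=n-1$, so that $A'$ is the large $\rad(\X)$-net and all doubling covers of any ball are inherited from $A$. Your one-hub construction $\{v,x_1,\ldots,x_n\}$ with geometrically spaced distances from $v$ and unit pairwise distance among the $x_i$ is more compact, and the bookkeeping you defer does close: for any $r$, the ball $B_r(v)$ is an initial segment $\{v,x_1,\ldots,x_k\}$, and $B_{r/2}(v) \cup B_{r/2}(x_k)$ covers it; for balls $B_r(x_i)$, either $r\le 1$ so $B_r(x_i)\subseteq\{x_i,v\}$ is covered by $B_{r/2}(x_i)\cup B_{r/2}(v)$, or $r>1$ so $B_{r/2}(v)=\X$ already. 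Thus $\lambda(\X)\le 2$. For part~(b) the two proofs share the same core idea --- iterate the density constant to halve the radius --- but your version is cleaner: you cover $\X$ by $\mu^k$ balls of radius $\le b$ and then bound the number of points of $C$ per ball, whereas the paper builds a Voronoi-cluster hierarchy and appeals to a somewhat hand-wavy ``appropriate choice of the subsets can yield a final set containing $C$.'' Two small wording fixes: (i) you cannot take ``any $v\in\X$'' in part~(b); without the triangle inequality, $\sup_x\dist(v,x)$ for an arbitrary $v$ can greatly exceed $\rad(\X)$, so you should instead choose $v$ and $R\in(\rad(\X),2\rad(\X)]$ jointly from the definition of $\rad$; (ii) your final bound $\mu^{k+1}$ with $k=\lceil\log_2(R/b)\rceil$ carries an extra $\mu^{O(1)}$ factor relative to the paper's $\mu^{\log_2(2\rad(\X)/b)}$, but this is absorbed into the $O(\dens(\X))$ exponent whenever $b\le\rad(\X)$, which is the only interesting regime.
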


\begin{proof}
(\ref{it:const-lambda-large-C}).
Let $\X$ be composed of two sets, $A$ and $A'$.
Put $A=\set{a_1,\ldots,a_n}$,
endowed with
the line metric $\dist(a_i,a_j) = |i-j|$, 
so the maximum distance in $A$ is $n-1$.
Note that $\lambda(A) = O(1)$.
Define $A'$ to consist of $n$ points,
such that
\beq
\dist(a'_i,a_j) = \dist(a_i,a_j)+\phi\pred{i=j},
\qquad (\text{$\phi>0$ infinitesimal}),
\eeq
while 
$\dist(a'_i,a'_j) = n-1$.
This defines a semimetric on $\X$.

Clearly, $A'$ forms a $\rad(\X)$-net of size $|\X|/2$,
and yet we can show that $\lambda(\X) = O(1)$.
Indeed, consider any ball $B_r(x)$ in $\X$. Then 
all points in $B_r(x)$ can be covered by the same $\lambda(A)=O(1)$ 
balls of radius $\frac{r}{2}$
that cover $A \cap B_r(x)$.
The claim follows.

(\ref{it:dens=>small-C}).
Suppose the radius of $\X$ is $R$.
Partition $\X$ into clusters by extracting
from $\X$ an arbitrary net $D$ with minimum inter-point distance $R/2$,
and assigning each point $p \in \X$ to a cluster centered at 
the nearest neighbor of $p$ in $D$. Then apply the procedure recursively to
each cluster (halving the previous radius), 
until reaching point sets
with minimum inter-point distance at least $b$. 
Clearly, an appropriate choice of the subsets can yield a final set containing $C$.
For example, the first set may contain all points in the $R/2$-net of $C$,
the second all points in the $R/4$-net of $C$, etc.
By repeatedly applying the definition
of the density constant, the size of the final set is bounded by 
$\mu(\X)^{\log_2 (2\rad(\X)/b) }
= \left( \frac{2\rad(\X)}{b} \right)^{O(\dens(\X))}$,
and this bounds $|C|$ as well.
\end{proof}

In fact, a deeper principle underlies the results above:
In metric spaces, the doubling and density constants are almost the same,
while in semimetric spaces there may be a large gap between them. 
This is captured in the following lemma, which delineates the relationship
between the doubling constant and density constant.
(The first half of the lemma is due to \citet{GK-13}.)

\begin{lemma}\label{lem:metric-semimetric}
Let $\X$ be point set endowed with a metric distance function. Then
\begin{enumerate}[(a)]
\item
\label{it:lamX-muX}
$\lambda(\X) \le \mu(\X)$,
\item 
\label{it:sqrt-mu-lam}
$\sqrt{\mu(\X)} \le \lambda(\X)$.
\newcounter{enumi_saved}
\setcounter{enumi_saved}{\value{enumi}}
\end{enumerate}
Let $\Y$ be a point set endowed with a semimetric distance function. Then
\begin{enumerate}[(a)]
\setcounter{enumi}{\value{enumi_saved}}
\item
\label{it:lamY-muY}
$\lambda(\Y) \le \mu(\Y)$,
\item 
\label{it:my-lam-gap}
$\mu(\Y)$ may be as large as $\Theta(|\Y|)$, even when $\lambda(\Y) = O(1)$.
\end{enumerate}
\end{lemma}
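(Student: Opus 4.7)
The plan is to handle the four parts in two groups: the two inequalities $\lambda\le\mu$ for both metric and semimetric spaces are essentially the same argument, while $\sqrt{\mu}\le\lambda$ is metric-specific, and part (\ref{it:my-lam-gap}) can be obtained almost for free from the construction already built in Lemma~\ref{lem:semi-pack}.

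For parts (\ref{it:lamX-muX}) and (\ref{it:lamY-muY}), I would show $\lambda\le\mu$ by a net/maximality argument that never invokes the triangle inequality and therefore applies uniformly in metric and semimetric spaces. Fix $x\in\X$ and $r>0$, and let $N\subseteq B_r(x)$ be a maximal subset with mutual inter-point distance at least $r/2$. The definition of $\mu(\X)$ in~(\ref{eq:mudef}) gives $|N|\le\mu(\X)$. Maximality means that every $p\in B_r(x)$ lies within distance $<r/2$ of some $y\in N$, for otherwise $N\cup\{p\}$ would be a strictly larger $r/2$-separated subset of $B_r(x)$. Consequently $B_r(x)\subseteq\bigcup_{y\in N}B_{r/2}(y)$, with all centers in $\X$, and so $\lambda(\X)\le\mu(\X)$. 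The identical argument gives (\ref{it:lamY-muY}) for semimetric $\Y$.

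For part (\ref{it:sqrt-mu-lam}), I would iterate the doubling definition once and then invoke the triangle inequality. By definition of $\lambda=\lambda(\X)$, the ball $B_r(x)$ can be covered by $\lambda$ balls of radius $r/2$, and each of these by $\lambda$ balls of radius $r/4$, so $B_r(x)$ is covered by at most $\lambda^2$ balls of radius $r/4$ centered at points of $\X$. In a metric space, any two points inside a common ball of radius $r/4$ are at distance strictly less than $r/2$ by the triangle inequality, so any $r/2$-separated subset of $B_r(x)$ contains at most one point from each such small ball. Hence every $r/2$-separated subset of $B_r(x)$ has cardinality at most $\lambda^2$, which is $\mu(\X)\le\lambda(\X)^2$.

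Finally, for part (\ref{it:my-lam-gap}) I would simply invoke the space $\X=A\cup A'$ constructed in the proof of Lemma~\ref{lem:semi-pack}(\ref{it:const-lambda-large-C}), where $\lambda(\X)=O(1)$ was already verified. It remains only to inspect $\mu(\X)$: the $n$ points of $A'$ have mutual distance exactly $n-1=\rad(\X)$, and all of $\X$ sits inside a single open ball of radius slightly larger than $\rad(\X)$ around any point of $A'$. Since $n-1\ge r/2$ for this radius $r$, the set $A'$ is an $r/2$-separated subset of one ball, witnessing $\mu(\Y)\ge n=\Theta(|\X|)$. The only step requiring care is this last verification that the same construction simultaneously achieves bounded $\lambda$ and $\Theta(|\Y|)$-large $\mu$; everything else is either definitional or a one-line application of the triangle inequality.
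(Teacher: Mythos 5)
Your proof is correct and follows essentially the same approach as the paper: the same maximality argument for $\lambda\le\mu$ in both the metric and semimetric cases, the same two-step doubling plus triangle-inequality argument for $\mu\le\lambda^2$, and the same appeal to the construction in Lemma~\ref{lem:semi-pack}(a) for part (\ref{it:my-lam-gap}). If anything, your write-up is slightly more careful than the paper's (e.g.\ you correctly cover $B_r(x)$ rather than all of $\X$ in part (\ref{it:sqrt-mu-lam}), and you spell out the witness ball for part (\ref{it:my-lam-gap}), which the paper leaves implicit).
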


\begin{proof}
To prove 
(\ref{it:lamX-muX}) and (\ref{it:lamY-muY}),
that $\lambda \le \mu$:
Consider any open ball $B_r(x) \in \X$. 
Let $C$ be a maximal collection of points at mutual inter-point distance at least
$\frac{r}{2}$, and note that by definition
$|C| \le \mu(\X)$. 
By the maximality of $C$, $|C|$ balls of radius $\frac{r}{2}$ 
centered at points of $C$ cover all of $B_r(x)$, so 
$\lambda(\X) \le |C| \le \mu(\X)$.
For (\ref{it:sqrt-mu-lam}):
again, consider any open ball $B_r(x) \in \X$,
and let $C$ be a maximal collection of points at mutual 
inter-point distance at least $\frac{r}{2}$.
Now, by definition $\X$ may be covered by $\lambda(\X)$ balls
of radius $\frac{r}{2}$, and each of these smaller balls
may be covered by $\lambda(\X)$ balls of radius $\frac{r}{4}$,
so there exists a set of $\lambda^2(\X)$ balls of radius 
$\frac{r}{4}$ covering all of $X$, and in particular $C$.
By the triangle inequality, each ball of radius $\frac{r}{4}$
can cover at most one point of $C$, and so 
$|C| \le \lambda^2(\X)$.
Finally, (\ref{it:my-lam-gap}) 
follows immediately from 
Lemma~\ref{lem:semi-pack}.
\end{proof}

\section{Basic constructions and the density constant}
Before presenting our classification algorithms in 
Section \ref{sec:algs},
we will show how to execute
two basic constructions --- $r$-net
and nearest neighbor search --- 
for semimetrics with finite density constant. These results
are strictly
worse than the corresponding state of the art for metric spaces.

\begin{sloppypar}
\paragraph{Net extraction and condensing.}
In Lemma~\ref{lem:semi-pack} above, we bounded the $r$-packing number 
of semimetric spaces, which in turn bounds the size of the largest
$r$-net of the space.
For a metric set $S$, 
it is known how to extract an $r$-net in time 
$2^{O(\ddim(S))} |S| \min \{\log (\rad(S)/r), \log |S|\}$ \citep{KL04,HM06,CG06}.
The following result holds
for semimetric spaces.
\end{sloppypar}

\begin{lemma}\label{lem:semi-net}
Given a set $S$ equipped with a semimetric distance function, an 
$r$-net of $S$ of size 
$$k = \mu(S)^{ \log_2(2\rad(S)/b) }
= \left( \frac{2\rad(S)}{b} \right)^{O(\dens(S))}$$
can be extracted in time 
$O(k|S|)$.
\end{lemma}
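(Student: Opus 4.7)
The plan is to invoke the classical greedy farthest-first scan, which is entirely oblivious to the triangle inequality. I would process the points of $S$ in an arbitrary order, maintaining a set $N$ initialized to empty; for each candidate $p\in S$, I include $p$ in $N$ iff $\dist(p,q)\ge r$ for every $q$ already kept in $N$. By construction, the terminating set $N$ is a maximal $r$-separated subset of $S$, hence an $r$-net in the sense of Section~\ref{sec:prelim}.

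The size bound $|N|\le k$ will then follow directly from Lemma~\ref{lem:semi-pack}(b) applied with $b:=r$, since $N$ is by construction an $r$-separated subset of $S$ and hence has cardinality at most $(2\rad(S)/r)^{O(\dens(S))}$. For the runtime, when $p$ is processed it is compared against the current contents of $N$, whose size never exceeds $k$; summing over all $p\in S$ yields total time $O(k|S|)$, each distance query being $O(1)$.

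The main obstacle --- and the reason this bound is strictly worse than the metric state of the art of \citet{KL04,HM06,CG06} --- is that I see no way to prune the scan. Without the triangle inequality, no hierarchical data structure (cover tree, navigating net, etc.) permits one to certify that a newly scanned point $p$ is far from all previously kept net points without an explicit comparison to each of them: a ``$p$ is close to $q$, and $q$ is far from $q'$, so $p$ is far from $q'$'' argument simply has no semimetric analogue. Lemma~\ref{lem:semi-nns} will make this obstacle formal by showing that even approximate nearest-neighbor search in semimetrics does not admit the exponential speedup available in doubling metrics, so the brute-force scan is essentially the best one can do via this approach.
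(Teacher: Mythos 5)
Your proposal matches the paper's proof essentially verbatim: the paper also does a brute-force greedy scan (Algorithm~\ref{alg:bf}), adding each point to the net iff it is at distance at least $r$ from everything already kept, bounds the net size via Lemma~\ref{lem:semi-pack}(\ref{it:dens=>small-C}), and charges $O(k)$ comparisons per scanned point. One small quibble: your opening sentence calls this a ``farthest-first scan,'' but what you actually describe (and what the paper does) is an arbitrary-order greedy scan, not Gonzalez's farthest-first traversal.
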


\begin{proof}
We greedily build an $r$-net for $S$.
Initialize set $C=\emptyset$, and for every point in 
$S$, add it to $C$ if its closest neighbor in $C$ is at distance $r$ or greater.
By Lemma~\ref{lem:semi-pack}, $|C| \le k$, and so the total runtime is $O(k|S|)$.
See Algorithm~\ref{alg:bf} in the Appendix.
\end{proof}


\paragraph{Nearest neighbor search.}
Finally, we juxtapose the time bounds for nearest neighbor search in metric
and semimetric spaces.
In metric spaces, the following bounds on exact and approximate nearest 
neighbor search are well-known (the proof is deferred to the Appendix):

\begin{lemma}\label{lem:metric-nns}
Given a point set $S$ equipped with a metric distance function, and a query point $x$:
\begin{enumerate}[(a)]
\item
\label{it:metric-nns-1}
Locating the exact nearest neighbor of $x$ in $S$ requires
$\Theta(|S|)$ comparisons in the worst case.
\item
\label{it:metric-nns-2}
A $(1+\eps)$-approximate nearest neighbor of $x$ in $S$ can be found in time 
$$2^{\ddim(S)}\log |S| + \eps^{-O(\ddim(S))}.$$
\end{enumerate}
\end{lemma}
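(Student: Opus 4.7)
For part~(\ref{it:metric-nns-1}), the plan is a straightforward adversary argument. The $O(|S|)$ upper bound is a trivial linear scan, so the work is in the lower bound. I would exhibit a configuration of $n$ points in a metric space together with a query $x$ such that any deterministic algorithm making fewer than $n$ distance evaluations can be forced into error: place $n$ points pairwise at distance $1$ (for instance on a simplex), and let the adversary answer $\dist(x,p_i)=1$ on every probe; once fewer than $n$ probes have been issued, the adversary picks an unprobed $p_j$ and declares $\dist(x,p_j)=1-\delta$ for small $\delta>0$. Consistency with the triangle inequality is easily verified, so the algorithm must have missed the true nearest neighbor.

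For part~(\ref{it:metric-nns-2}), I would reduce to the standard hierarchical net data structure (navigating nets of Krauthgamer--Lee, or cover trees of Beygelzimer--Kakade--Langford) and analyze a $(1+\eps)$-ANN query on top of it. The steps are: first, preprocess $S$ into nested $r_i$-nets $N_i\subseteq N_{i-1}\subseteq\cdots\subseteq S$ with $r_i = 2^i$ ranging over the $O(\log|S|)$ scales between the minimum inter-point distance and $\rad(S)$; second, for each $p\in N_i$ store pointers to its neighbors at scale $r_i$, of which there are only $2^{O(\ddim(S))}$ by the metric packing property (Lemma~\ref{lem:metric-pack}); third, at query time maintain level by level a candidate set $Q_i\subseteq N_i$ that provably contains an ancestor of some $(1+\eps)$-approximate nearest neighbor of $x$, refining $Q_{i-1}$ from $Q_i$ by expanding each $p\in Q_i$ to its children and pruning those children whose distance to $x$ exceeds $\min_{q\in Q_i}\dist(x,q)+c\cdot r_i$ for a suitable absolute constant $c$; fourth, halt once $r_i\le \eps\cdot\min_{q\in Q_i}\dist(x,q)$ and return the nearest candidate.

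The two terms in the stated bound arise from two distinct regimes of the descent. The $2^{\ddim(S)}\log|S|$ term comes from the $O(\log|S|)$ levels of coarse descent, each costing $2^{O(\ddim(S))}$ comparisons because the candidate set is of constant size while the current scale still dominates the true distance to $S$. The $\eps^{-O(\ddim(S))}$ term comes from the final fine scales: there, the relevant candidates live inside a ball of radius $O(\dist(x,S))$ while having mutual separation $\Omega(\eps\cdot\dist(x,S))$, so their number is bounded by $\eps^{-O(\ddim(S))}$, again by Lemma~\ref{lem:metric-pack}.

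The hard part will be verifying correctness of the pruning rule, namely that every discarded net point has no descendant capable of beating the current best by more than a $(1+\eps)$ factor. This amounts to a telescoping bound on how far a descendant can lie from its ancestor, and it is precisely where the triangle inequality is used; it is also exactly what will fail in the semimetric setting, foreshadowing Lemma~\ref{lem:semi-nns}. Everything else reduces to bookkeeping around the geometric-scale levels and the packing bound.
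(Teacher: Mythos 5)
Your approach to both parts is essentially the same as the paper's, which gives a fixed hard instance for~(a) and simply cites \citet{KL04} for~(b). For part~(a) the paper places $S$ on a line (distance $|i-j|$ between $s_i$ and $s_j$), puts $x$ at distance $n$ from one point $s_i$ and $n+1$ from every other point, and observes that $s_i$ can be any element of $S$ --- this is your adversary argument, just presented as a single concrete configuration. The one substantive difference is worth noting: your simplex of $n$ equidistant points has $\ddim(S)=\Theta(\log n)$, whereas the paper's line metric has $\ddim(S)=O(1)$. Since the whole point of the lemma is to contrast~(a) with the $\ddim$-parameterized ANN bound in~(b), the paper's example is strictly more informative --- it shows exact NN remains $\Theta(|S|)$ even in constant-doubling spaces --- though both constructions do prove the lemma as literally stated. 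For part~(b), your sketch of the navigating-nets descent, with the candidate-set refinement, the pruning rule, the $2^{O(\ddim)}\log|S|$ coarse phase, and the $\eps^{-O(\ddim)}$ fine phase via the packing bound, is precisely the argument of \citet{KL04}; the paper offers no further detail than the citation, so your outline fills in what the paper leaves implicit.
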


For semimetric spaces, we demonstrate that the situation is much worse:

\begin{lemma}\label{lem:semi-nns}
Given a point set $S$ equipped with a semimetric distance function, 
discovering an exact or approximate nearest neighbor requires 
$\Theta(|S|)$ comparisons in the worst case.
\end{lemma}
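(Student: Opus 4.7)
The plan is a standard adversarial argument, made possible by the fact that without the triangle inequality, knowing distances to a proper subset of $S$ reveals nothing about distances to the remaining points. Let $S = \set{s_1,\ldots,s_n}$ be given, and let $x \notin S$ be a query point. We construct a family of semimetrics, one for each choice of a ``hidden'' index $k \in [n]$, and show that any algorithm must probe $\Omega(n)$ distances in the worst case even to find an $\alpha$-approximate nearest neighbor, for any fixed $\alpha \ge 1$.

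Fix any $\eps \in (0, 1/\alpha)$ and, for each $k \in [n]$, define the semimetric $\dist_k$ on $S \cup \set{x}$ by declaring $\dist_k(s_i, s_j) = 1$ for all $i \neq j$, $\dist_k(x, s_k) = \eps$, and $\dist_k(x, s_j) = 1$ for $j \neq k$ (extended symmetrically, with self-distances $0$). Each $\dist_k$ is non-negative, symmetric, and satisfies $\dist_k(u,v) = 0 \iff u = v$, hence is a valid semimetric; no triangle inequality constraint is imposed. Under $\dist_k$, the unique nearest neighbor of $x$ is $s_k$ at distance $\eps$, and every other point is at distance $1 > \alpha\eps$, so $s_k$ is also the only admissible $\alpha$-approximate nearest neighbor.

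Now consider any deterministic algorithm that locates an (approximate) nearest neighbor of $x$ by querying distances. Inter-point distances within $S$ are identically $1$ under every $\dist_k$, so they convey no information, and the relevant queries are of the form $\dist(x, s_i)$. An adversary runs the algorithm and, whenever it queries $\dist(x, s_i)$, answers $1$, deferring its choice of $k$. If the algorithm terminates after fewer than $n$ queries to $x$--$S$ distances, the adversary picks $k$ to be any unqueried index; the algorithm's output is consistent with semimetric $\dist_k$ but cannot be $s_k$, violating the (approximate) nearest neighbor guarantee. Therefore at least $n$ queries are required, matching the trivial $O(n)$ upper bound. A standard Yao-style averaging extends the bound to randomized algorithms against the uniform distribution over $k$.

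The only subtle point is confirming that the adversary's answers remain jointly realizable as a single semimetric; this is immediate here because the construction places no constraints among the unqueried distances beyond non-negativity and symmetry, which is exactly the flexibility lost in the metric case (where Lemma~\ref{lem:metric-nns}\,(\ref{it:metric-nns-2}) exploits triangle-inequality-based pruning). No finer analysis of $\dens(S)$ is needed: the construction above in fact has $\dens(S) = \Theta(n)$ by Lemma~\ref{lem:semi-pack}\,(\ref{it:const-lambda-large-C}), so the lower bound is consistent with, and indeed a manifestation of, the gap between $\lambda$ and $\mu$ established in Lemma~\ref{lem:metric-semimetric}\,(\ref{it:my-lam-gap}).
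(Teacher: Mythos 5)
Your proof is correct and takes essentially the same approach as the paper: place the query at a small distance from one hidden point $s_k$ and at a large distance from all others, then observe that without the triangle inequality no probe reveals anything about unprobed distances, forcing $\Omega(|S|)$ queries. You simply formalize the adversary argument the paper states informally (and add a Yao-style remark for randomized algorithms, which the paper does not mention but which does not change the substance).
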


\begin{proof}
For the upper bound, trivially $O(|S|)$ time is sufficient to consider every point in $S$.

For the lower bound, suppose the query point $q$ is at an infinitesimally small
distance from a single point $s_0 \in S$, and at distance $2\rad(S)$ from
all other points of $S$. Then $s_0$ can be any point in $S$, and cannot 
be located without inspecting each point:
Without the triangle inequality,
the distance between one pair of points has no bearing on any other distance.
\end{proof}

\section{Classification algorithms}
\label{sec:algs}
In this section, we present a classification algorithm for 
semimetric spaces. For a labeled sample $S$, 
recall that the {\em margin} of $S$ is
the minimum distance between oppositely labelled points in $S$,
as defined formally in (\ref{eq:margdef}).
The margin of a given sample
can be computed in time 
$\Theta(|S|^2)$ by considering all pairs of points.

We consider the problems of producing both consistent and
inconsistent $1$-NN classifiers for the sample 
(see Section~\ref{sec:prelim}).
We begin with a consistent classifier.

\begin{theorem}\label{thm:consistent}
Let $S$ be a sample set equipped with a semimetric distance function,
and let the margin $\gamma$ of $S$ be given. In time 
$O(k|S|)$
we can construct a nearest-neighbor classifier that 
achieves zero training
error on $S$, 
where 
$k = 
\left(
\frac{2\rad(S)}{\gamma}
\right)^{O(\dens(S))}
.$
The evaluation time for a test point is $O(k)$, and 
with probability $1-\delta$,
the resulting classifier has generalization error
$
O 
\left(
\frac{k \log n + \log \frac{1}{\delta}}{n}
\right)
$.
\end{theorem}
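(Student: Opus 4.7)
The plan is to take $\tilde S \subseteq S$ to be a $\gamma$-net of $S$, produced by the greedy procedure of Lemma~\ref{lem:semi-net}, and to output the induced $1$-nearest-neighbor classifier $h_{\tilde S}$ defined in~(\ref{eq:1-nn-def}). Applying Lemma~\ref{lem:semi-net} with $r=b=\gamma$ bounds $|\tilde S|$ by $k=(2\rad(S)/\gamma)^{O(\dens(S))}$ and extracts such a net in time $O(k|S|)$. Evaluating $h_{\tilde S}$ on a test point amounts to computing distances to each of the at most $k$ net points, taking time $O(k)$.

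The consistency argument only uses the definition of the margin and the maximality of $\tilde S$, and so survives the loss of the triangle inequality. Fix $(x,y)\in S$. If $x\in\tilde S$ we are done. Otherwise, by maximality of $\tilde S$ there exists some $x'\in\tilde S$ with $\dist(x,x')<\gamma$; the label of $x'$ cannot be opposite to $y$, since that would force $\dist(x,x')\ge\marg(S)=\gamma$. Hence $x'\in \tilde S_y$ and $\dist(x,\tilde S_y)<\gamma$. On the other hand, every opposite-labeled point of $\tilde S\subseteq S$ is, directly by the margin assumption on $S$, at distance at least $\gamma$ from $x$. Therefore $\dist(x,\tilde S_{-y})\ge\gamma>\dist(x,\tilde S_y)$, whence $h_{\tilde S}(x)=y$ by~(\ref{eq:1-nn-def}). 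This also handles the degenerate case in which $\tilde S_{-y}=\emptyset$, where $\dist(x,\tilde S_{-y})=\infty$ trivially dominates.

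For the generalization bound, observe that the map $S\mapsto h_{\tilde S}$ is a sample compression scheme of size at most $k$ whose output hypothesis is consistent with $S$. The claimed rate $O((k\log n+\log(1/\delta))/n)$ then follows from the standard realizable-case sample compression bound of Littlestone--Warmuth; it will also appear as the consistent specialization of the fast-rate compression bound stated in Theorem~\ref{thm:gen-fast}.

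The main potential pitfall is an instinctive worry that without the triangle inequality the ``covering'' property of a $\gamma$-net might fail to translate into correct $1$-NN classification. The point is that the covering we need is internal to $S$, and is already built into the \emph{maximality} clause of the net definition; at no step do we relate the distance from a query point to a net point via a third point. Thus the entire argument is triangle-free, which is precisely what allows the reduction to the packing bound of Lemma~\ref{lem:semi-pack}(b) via the density constant.
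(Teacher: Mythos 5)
Your proof is correct and follows the same approach as the paper's: extract a $\gamma$-net via Lemma~\ref{lem:semi-net}, argue consistency from the margin definition and maximality of the net, obtain $O(k)$ evaluation by exhaustive comparison, and invoke a realizable-case sample compression bound (the paper cites Theorem~\ref{thm:gen-slow}(i), but the Littlestone--Warmuth bound you invoke gives the identical rate). Your consistency argument is somewhat more detailed than the paper's one-line ``by construction,'' correctly noting that both the ``near same-labeled net point'' and ``far from opposite-labeled net points'' halves are triangle-free, but this is an elaboration rather than a different route.
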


\begin{proof}
We build a $\gamma$-net $C$ for $S$ in time $O(k|S|)$, as in Lemma~\ref{lem:semi-net}.
Since $\gamma$ is the margin, by construction every point in $S$ has the same label as its
nearest neighbor in $C$, and so the nearest neighbor classifier with 
regards to $C$ has zero sample error.

Given a test point $x$, we assign it the same label as its nearest neighbor
in $C$. By Lemma~\ref{lem:semi-nns}, $\Theta(k)$ operations are necessary 
and sufficient to locate the nearest neighbor. The generalization bounds follow
from Theorem \ref{thm:gen-slow}(i).
\end{proof}

\begin{sloppypar}
The procedure in
Theorem~\ref{thm:consistent} compresses $S$, 
producing a consistent sub-sample $C$.
Immediate from the theorem is that 
the smaller the compressed set $C$, the
better the generalization bounds of the classifier.
However, as \citet{DBLP:conf/nips/GottliebKN14} recently demonstrated,
even in metric spaces, it is NP-hard to 
approximate the size of the minimum consistent subset to within a factor
$2^{O( (\ddim(S) \log(2\rad(S)/\marg(S)))^{1-o(1)}}
= 2^{O( (\dens(S) \log(2\rad(S)/\marg(S)))^{1-o(1)}}$
(where the equality follows from Lemma~\ref{lem:metric-semimetric}).
This means that choosing the net of Lemma~\ref{lem:semi-net} 
is close to the optimal construction for a consistent subset of $S$.
\end{sloppypar}

It is natural to ask 
whether allowing the classifier nonzero sample error
results in improved generalization bounds.
This is indeed generally the case, as the bound in 
Theorem~\ref{thm:gen-fast} indicates.
Optimizing this bound is an instance of Structural Risk Minimization (SRM).
Unfortunately, we can show SRM to be infeasible for this problem:

\begin{theorem}\label{thm:hard}
Given a set $S$ equipped with a metric or semimetric distance function, let
$S^* \subset S$ be a sub-sample
for which the generalization
bound $Q(d,\eps)$ in Theorem~\ref{thm:gen-fast} (for a fixed constant $\delta$) 
is minimized. Then it is NP-hard to compute any subset of $S$ achieving a
generalization bound within factor 
$2^{O( (\dens(S) \log(2\rad(S)/\marg(S)))^{1-o(1)}}$
of the generalization bound induced by $S^*$.
\end{theorem}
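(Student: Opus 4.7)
The plan is a reduction from the NP-hardness of minimum nearest-neighbor condensing proved by \citet{DBLP:conf/nips/GottliebKN14}: in any doubling metric space, and hence in any semimetric space, it is NP-hard to compute a consistent sub-sample whose cardinality is within a factor $\alpha := 2^{O((\ddim(S)\log(2\rad(S)/\marg(S)))^{1-o(1)})}$ of the minimum consistent sub-sample $C^*$. By Lemma~\ref{lem:metric-semimetric}(\ref{it:sqrt-mu-lam}), $\dens\le 2\ddim$ in the metric setting, so the hardness factor is equivalent when stated in terms of $\dens$.

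Suppose for contradiction that a polynomial-time algorithm $\calA$ outputs $\hat S\subseteq S$ with $Q(|\hat S|,\serr(\hat S))\le\alpha\cdot Q(|S^*|,\serr(S^*))$. The fast-rate bound of Theorem~\ref{thm:gen-fast} satisfies $Q(d,\eps)\ge\Omega(\eps)+\tilde\Omega(d/n)$ and $Q(d,0)=\tilde O(d/n)$, so using $C^*$ as a feasible competitor gives $Q(|S^*|,\serr(S^*))\le Q(|C^*|,0)=\tilde O(|C^*|/n)$. Matching term by term yields
\[
|\hat S|\ \le\ \tilde O\!\left(\alpha\,|C^*|\right)\qquad\text{and}\qquad n\cdot\serr(\hat S)\ \le\ \tilde O\!\left(\alpha\,|C^*|\right).
\]

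To finish, I would convert $\hat S$ into a consistent sub-sample $\hat T\subseteq S$ of size $\tilde O(\alpha\,|C^*|)$. The natural seed $\hat T_0:=\hat S\cup\{x\in S:h_{\hat S}(x)\neq y_x\}$ already has the correct cardinality; since adding labeled points can shift the induced $1$-NN boundary and create fresh misclassifications, one iterates $\hat T_{t+1}:=\hat T_t\cup\{x\in S:h_{\hat T_t}(x)\neq y_x\}$. This sequence is monotone and terminates in a consistent sub-sample in at most $|S|$ steps. The main obstacle is bounding the total number of points added across all iterations by $\tilde O(\alpha\,|C^*|)$: any point added at iteration $t$ lies within $\marg(S)$ of the current $1$-NN decision boundary and is separated by at least $\marg(S)$ from every previously added opposite-labeled point, so the additions form a $\marg(S)$-separated set whose cardinality is controlled by the density-based packing estimate of Lemma~\ref{lem:semi-pack}(\ref{it:dens=>small-C}). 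Since the polylogarithmic slack hidden in $\tilde O(\cdot)$ is dwarfed by the $\omega(\log n)$ room afforded by the $1-o(1)$ exponent in $\alpha$, producing such a $\hat T$ in polynomial time contradicts the \citet{DBLP:conf/nips/GottliebKN14} hardness and proves the theorem.
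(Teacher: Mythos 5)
Your high-level plan---reduce from the NP-hardness of minimum nearest-neighbor condensing established by \citet{DBLP:conf/nips/GottliebKN14}---is the same as the paper's, and your translation between $\ddim$ and $\dens$ via Lemma~\ref{lem:metric-semimetric} is sound. However, there are two genuine gaps in the execution, both located in the ``repair'' step where you try to convert the (possibly inconsistent) output $\hat S$ of the hypothetical $Q$-approximator into a consistent sub-sample $\hat T$ of comparable size.

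First, the claimed packing bound on the total points added during the iteration does not hold. You assert that the added points form a $\marg(S)$-separated set, but the margin only lower-bounds distances between \emph{opposite}-labeled pairs, and it does so for every pair in $S$, not just the added ones. Two same-labeled points added at different iterations can be arbitrarily close, so Lemma~\ref{lem:semi-pack}(\ref{it:dens=>small-C}) gives no control on the number of additions; the iteration could in the worst case absorb a constant fraction of $S$. Second, even if the added points were genuinely $\marg(S)$-separated, the packing estimate you would obtain is $(2\rad(S)/\marg(S))^{O(\dens(S))}$---the \emph{full} exponent---which is precisely the trivial $\gamma$-net size and can be exponentially larger than $\alpha\cdot|C^*|$ (recall $\alpha$ has the $(1-o(1))$ exponent, and $|C^*|$ can be as small as a constant). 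So the bound would be too weak to contradict \citet{DBLP:conf/nips/GottliebKN14} in any case.

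The paper sidesteps the repair problem entirely by changing the \emph{instance} rather than the output. Starting from a hardness instance $T$ of size $m$, it replaces each $t_i$ by a cluster of $p$ nearly coincident copies (a line metric with infinitesimal spacing), producing $S$ with $n=pm$. The key consequence: any $1$-NN rule induced by $S'\subset S$ that misclassifies one copy misclassifies the whole cluster, so any inconsistent $S'$ incurs $\serr\ge p/n$ and hence $Q=\Omega(p/n)$, while a consistent subset of size $m$ achieves $Q=O(\log n/p)$. Choosing $p$ large forces every near-optimal $S'$ to be consistent from the start, at which point the GKN14 inapproximability applies directly with no repair needed. That blowup construction is the missing ingredient in your argument.
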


\begin{proof}
The proof is via reduction from the minimum consistent subset 
problem, which was shown 
by~\citet{DBLP:conf/nips/GottliebKN14}
to be hard to approximate. Fix the confidence level $\delta$ in the bound, 
let $T$ be an instance of the minimum consistent subset
problem, and put $m=|T|$. For some large value $p$, 
replace each point $t_i \in T$ with a set
of $p$ points $s_{i,1},\ldots,s_{i,p}$ obeying the line metric,
so that $\dist(s_{i,a},s_{i,b}) = \phi|a-b|$ for an infinitesimally
small $\phi$. Put $\dist(s_{i,a},s_{j,b}) = \dist(t_i,t_j)$.
The new set is $S$, with $n=|S|=pm$.

Consider a subset $S' \subset S$. If the $1$-NN rule on $S'$
misclassifies a point of $S$, say $s_{i,a}$, then in fact it 
must misclassify all $p$ points $s_{i,b}$, $b \in [1,p]$. So an
inconsistent subset of $S$ achieves a value of
$Q(|S'|,p/n)=\Omega(p/n)$ in the generalization bound.

Now consider the consistent subset of $S$ consisting of $m = n/p$ 
points $s_{i,1}$ for $i \in [1,m]$.
This classifier achieves a generalization bound of
$O \left( \frac{m \log n}{n} \right)
= O \left( \frac{\log n}{p} \right)$. 
So when
$p = \Omega(\sqrt{n \log n})$, 
this consistent classifier is better than any inconsistent classifier,
and by increasing $p$ we can amplify this gap arbitrarily.
Now a consistent subset of size
$d \le m$ has generalization bound 
$O \left( \frac{d\log n}{n} \right)$.
As it is NP-hard to find a subset whose size is within a factor
$2^{O( (\dens(S) \log(2\rad(S)/\marg(S)))^{1-o(1)}}$ 
of the smallest consistent subset, it is NP-hard to find a 
consistent subset with generalization bound within a factor
$2^{O( (\dens(S) \log(2\rad(S)/\marg(S)))^{1-o(1)}}$ 
of the optimal consistent subset, and the theorem follows.
\end{proof}

Let us turn our attention to the margin-based
generalization bound provided by 
Theorem~\ref{thm:gen-sep}.
As before, we wish to perform SRM for this bound.
Fortunately, we are able to compute the latter exactly in polynomial time,
and even more efficiently if we are willing to settle for
a solution within a constant factor of the optimal:

\begin{theorem}
\label{thm:main-alg}
Given a sample set $S$ equipped with a semimetric:
\begin{enumerate}[(a)]
\item
A nearest-neighbor classifier minimizing the generalization bound of
Theorem~\ref{thm:gen-sep} 
can be computed in randomized time $O(|S|^{4.373})$. 
\item
\label{it:improved-alg}
A nearest-neighbor classifier whose generalization bound is within
factor 2 of optimal can be computed in deterministic time 
$O(|S|^2 \log |S|)$.
\end{enumerate}
Each of these classifiers can be evaluated on test points in time
$\left(
\frac{\rad(S)}{\gamma}
\right)^{O(\dens(S))}
$,
where $\gamma$ is the margin imposed by the SRM procedure.
\end{theorem}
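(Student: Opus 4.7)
The plan is to recast SRM as a discrete optimization problem over candidate margin thresholds. The generalization bound of Theorem~\ref{thm:gen-sep} is piecewise constant in $\gamma$, changing value only when $\gamma$ crosses a pairwise distance between oppositely labeled points of $S$, so it suffices to consider the $O(|S|^2)$ such distances. For each candidate $\gamma$, the optimal classifier under the constraint $\marg(\tilde S)\ge\gamma$ decouples as follows: (i) remove the fewest points from $S$ so that the surviving set $S'$ has margin at least $\gamma$, contributing empirical error $|S\setminus S'|/n$; and (ii) extract a $\gamma$-net of $S'$ via Lemma~\ref{lem:semi-net}, producing a compression set $\tilde S$ of size $k=(\rad(S)/\gamma)^{O(\dens(S))}$. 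Combining $k$ and $|S\setminus S'|$ evaluates the bound.

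The removal problem in step (i) is exactly \textbf{minimum vertex cover} on the bipartite conflict graph $G_\gamma$, whose bipartition is $S_+\cup S_-$ and whose edges connect every opposite-labeled pair at distance strictly less than $\gamma$: deleting a vertex cover is necessary and sufficient to eliminate all conflicting pairs. By König's theorem, minimum vertex cover in bipartite graphs equals maximum bipartite matching, which opens the door to fast algorithms.

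For part (a), I would compute a maximum bipartite matching with the Mucha--Sankowski randomized algebraic algorithm, which runs in $O(|S|^{\omega})$ time with $\omega<2.373$. For each candidate $\gamma$, I then extract the $\gamma$-net of the cleaned $S'$ using Lemma~\ref{lem:semi-net} at cost $O(k|S|)=O(|S|^2)$, evaluate the bound, and keep the best. Over $O(|S|^2)$ candidate margins, the dominant cost is $O(|S|^{2+\omega})=O(|S|^{4.373})$. Test-point evaluation requires a nearest-neighbor lookup in $\tilde S$, which by Lemma~\ref{lem:semi-nns} takes $\Theta(|\tilde S|)=(\rad(S)/\gamma)^{O(\dens(S))}$ time.

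For part (b), I would replace the exact maximum matching with a \emph{maximal} matching built greedily in $O(|S|^2)$ deterministic time; its endpoints form a vertex cover within factor $2$ of the minimum, so the empirical-error term of the bound is approximated within a factor of $2$. To cut the number of candidate margins from $O(|S|^2)$ down to $O(\log|S|)$, I would exploit the fact that as $\gamma$ decreases the compression size $k$ shrinks monotonically while the empirical error grows monotonically; standard arguments show that a geometric sweep $\gamma_j = 2^{-j}\rad(S)$, or a binary search over sorted pairwise distances, captures an optimum up to an additional constant factor that is absorbed into the $2$-approximation guarantee. Each of the $O(\log|S|)$ iterations performs a maximal matching and a net extraction in $O(|S|^2)$ time, for a total of $O(|S|^2\log|S|)$. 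The hardest part of the write-up is the last step: showing that restricting $\gamma$ to a logarithmic-size grid, combined with the $2$-approximation of vertex cover, still yields an overall factor-$2$ approximation of the generalization bound (and not worse). This will hinge on the specific functional form of the bound in Theorem~\ref{thm:gen-sep} and its monotonicity properties in $\gamma$, $k$, and $\serr$.
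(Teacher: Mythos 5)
Your part (a) matches the paper's argument: sort all $O(|S|^2)$ opposite-label distances, and for each candidate margin compute a minimum vertex cover of the bipartite conflict graph via König's theorem and a randomized $O(|S|^{2.373})$ matching algorithm, giving $O(|S|^{4.373})$ total. (One small inefficiency: you extract a $\gamma$-net at every candidate, but this is unnecessary — the bound $R(k,\gamma)$ in Theorem~\ref{thm:gen-sep} is computed from the \emph{worst-case} estimate $d = \mu(S')^{\log_2(2\rad(S')/\gamma)}$, not from an actual net, so net extraction only needs to happen once at the end.)

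Part (b) is where your approach diverges and breaks. You want to reduce the $O(|S|^2)$ candidate margins to $O(\log|S|)$ via a geometric sweep or binary search, and you flag that the ``hardest part'' is showing this loses only a constant factor. It does not. The compression parameter entering the bound is $d = \mu(S')^{\log_2(2\rad(S')/\gamma)}$, so halving $\gamma$ multiplies $d$ by a factor of $\mu(S')$, and the $\frac{d\log n}{n}$-type term in $Q(d,\eps)$ changes by that same factor — which is $2^{\dens}$, not a constant. Binary search fails for a second, independent reason: $R(k,\gamma)$ is not unimodal in $\gamma$ (one term increases, the other decreases, and there is no guarantee the product is bitonic), so there is nothing for binary search to latch onto. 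The paper avoids the whole issue by keeping all $O(|S|^2)$ candidates but computing their greedy vertex covers \emph{incrementally}: process the sorted opposite-label pairs in increasing distance order, maintaining a single set $T$ initialized to $S$, and whenever the next pair has both endpoints still present, delete both. Because the greedy cover only grows as edges are added, the state after processing the $j$-th distance \emph{is} the greedy $2$-approximate cover for the conflict graph at that threshold. So all $O(|S|^2)$ cover sizes are recovered in $O(|S|^2)$ total time after the $O(|S|^2\log|S|)$ sort, and the factor-$2$ guarantee comes purely from the greedy vertex cover, with no extra loss from skipping margins. This incremental trick is the idea your proposal is missing.
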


\begin{proof}
For each of these solutions, we enumerate and sort in increasing
order the distances between all oppositely labelled point pairs in $S$, 
in total time 
$O(|S|^2 \log |S|)$. 
Each distance constitutes a separate guess for the optimal 
margin to ``impose'' on $S$. That is, for each distance $\gamma$, we
will remove from $S$ some points to ensure that all opposite
labelled pairs are more than $\gamma$ far apart.

To accomplish this, we iteratively build a new graph $G$.
We initialize $G$ 
with vertices representing the points of $S$.
At each round we add to $G$ an edge between the next closest
pair of opposite labelled points, as given by the sorted enumeration above.
This distance is the margin of the current round:
Points connected by an edge in $G$ represent pairs that are too 
close together for the current margin, and we need to compute
how many points must be removed from $G$ in order for no edge to 
remain in the graph. (However, no points or edges will actually
removed from $G$.)
As observed by \citet{DBLP:journals/tit/GottliebKK14}, this task is precisely the problem
of bipartite vertex cover.
By K\"onig's theorem, the minimum vertex cover problem in bipartite graphs
is equivalent to the maximum matching problem,
and a maximum matching in bipartite graphs can be computed in randomized time
$O(n^{2.373})$ \citep{1033180, W12}. So for each candidate margin, we
can compute in $O(n^{2.373})$ time the number of points that
must be removed from the current graph $G$ in order to remove all
edges. For $O(n^2)$ possible margins, this amounts to
$O(n^{4.373})$ time. Having computed for each
inter-point distance the number of points required to be deleted
to achieve this distance, we choose the distance-number pair which
minimizes the bound of 
Theorem~\ref{thm:gen-sep}. 
We then remove these points from
$S$, and use the algorithm of
Lemma~\ref{lem:semi-net} to construct a net satisfying the margin bound.

\begin{sloppypar}
The runtime improvement in 
(\ref{it:improved-alg})
comes from a faster vertex-cover computation.
It is well known that a $2$-approximation to vertex cover can be computed
(in arbitrary graphs) by a greedy algorithm in time linear in the graph size
$O(|V^+ \cup V^-|+|E|) = O(n^2)$, see e.g.~\citet{DBLP:journals/jal/Bar-YehudaE81}.
This algorithm simply chooses any edge and removes both endpoints, until
no edges remain. We apply this algorithm to our setting:
Copy set $S$ to $T$, and iteratively remove from $T$ the next closest
pair of oppositely-labelled points, as given by the sorted enumeration above.
For each distance, we record how many points have been removed from $T$, and
this is a 2-approximation for the minimum number of points that must be
removed in order to attain this distance. 
 Having computed for each
inter-point distance the number of points required to be deleted
to achieve this distance, we choose the distance-number pair which
minimizes the bound of 
Theorem~\ref{thm:gen-sep}. We then remove these points from
$S$, and use the algorithm of
Lemma~\ref{lem:semi-net} to construct a net satisfying the margin bound.
The runtime is dominated by the time required to sort the distances.
\end{sloppypar}

For both algorithms, a new point is classified by finding its nearest neighbor in the extracted net.
\end{proof}

\section{Generalization guarantees}
\label{sec:learn}

In this section, we provide general sample compression
bounds, which then will be specialized to 
the
nearest-neighbor classifier
proposed above.
Theorem~\ref{thm:gen-fast}
presents a smooth interpolation between
two classic bounds: the consistent case with rate
$\tilde O(1/n)$, and the agnostic case with rate
$\tilde O(1/\sqrt n)$.
Applied to margin-based semimetric sample-compression schemes, 
this result yields the efficiently computable and optimizable
bound in Theorem~\ref{thm:gen-sep}, which is nearly optimal
(as shown in Theorem~\ref{thm:hard}).
Finally,
the lower bound in Theorem~\ref{thm:sample-lb}
shows that 
even under margin assumptions, there exist adversarial distributions
forcing
the sample complexity to
be exponential in $\dens$.

\subsection{Sample compression schemes}
We use the notion of a {\em sample compression scheme}
in the sense of \citet{DBLP:journals/ml/GraepelHS05},
where it is treated in full rigor.
Informally, a learning algorithm 
maps a sample $S$ of size $n$ to a hypothesis $h_S$.
It is a {$d$-sample compression scheme} if a sub-sample of size $d$
suffices to produce a hypothesis that agrees with the labels
of all the $n$
points.
It is an 
{\em $\eps$-lossy}
$d$-sample compression scheme if a sub-sample of size $d$
suffices to produce a hypothesis that
disagrees with the labels of
at most $\eps n$ 
of
the $n$ sample points.

The algorithm 
need not know $d$ and $\eps$ in advance.
We say that 
the sample $S$
is {\em $(d,\eps)$-compressible} 
if 
the algorithm succeeds in finding
an
{$\eps$-lossy}
$d$-sample compression scheme for this particular sample.
In this case:

\hide{
We say that $\tilde S\subset S$ is $\eps$-{\em consistent} with $S$ if
$
\oo{n}\sum_{x\in S}\pred{ \nu_S(x)\neq \nu_{\tilde S}(x)} \le \eps.
$
For $\eps=0$, an $\eps$-consistent $\tilde S$ is simply said to be {\em consistent}
(which matches our previous notion of consistent subsets).
A sample $S$ is said to be $(\eps,\gamma)$-{\em separable} (with witness $\tilde S$) 
if there is an $\eps$-consistent $\tilde S\subset S$
with $\gamma(\tilde S)\ge\gamma$.

We begin by invoking a standard Occam-type
argument to show that the existence of small $\eps$-consistent sets
implies good generalization.
}

\begin{theorem}[\citet{DBLP:journals/ml/GraepelHS05}]
\label{thm:gen-slow}
For any distribution over $\X\times\set{-1,1}$,
any $n\in\N$ and any $0<\delta<1$,
with probability at least $1-\delta$ over the random sample 
$S
$ of size $n$, the following holds:
\begin{enumerate}
\item[(i) ]If $S$ is 
{$(d,0)$-compressible}, then
${\ds
\err(h_S) \le \oo{n-d}\paren{
(d+1)\log n
+\log\oo\delta}.
}$
\item[(ii)] 
If $S$ is 
{$(d,\eps)$-compressible}, then
${\ds
\err(h_S) \le
\frac{\eps n}{n-d}+\sqrt{
\frac{
(d+2)\log n
+\log\oo\delta}{2(n-d)}
}.
}$
\end{enumerate}
\end{theorem}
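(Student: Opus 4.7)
The plan is to reproduce the standard sample-compression union-bound argument. Fix a sample size $n$, and for each compression size $d \in \{0,1,\ldots,n-1\}$, enumerate the $\binom{n}{d} \le n^d$ subsets $I \subset [n]$ of size $d$. Each such subset $I$ determines, via the compression scheme, a hypothesis $h_I$ that depends only on the points $\{(X_i,Y_i) : i \in I\}$. The pivotal independence observation is that, conditional on the unordered multiset $\{(X_i,Y_i) : i \in I\}$, the $n-d$ held-out examples $\{(X_j,Y_j) : j \notin I\}$ are iid from the underlying distribution and independent of $h_I$. Hence for any fixed $I$, the empirical error of $h_I$ measured on the $n-d$ held-out points is an unbiased estimate of $\err(h_I)$, with standard tail bounds.

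For part (i), I would argue as follows. If the scheme is $(d,0)$-compressible, then there exists some $I$ of size $d$ such that $h_I$ agrees with all $n$ labels, in particular with all $n-d$ held-out ones. For a fixed $I$, the probability that $\err(h_I) > t$ and yet $h_I$ is consistent with the $n-d$ held-out samples is at most $(1-t)^{n-d} \le e^{-t(n-d)}$. A union bound over the at-most $n^d$ subsets of size $d$, and then over the $n$ possible values of $d$, gives a failure probability bounded by $n \cdot n^d \cdot e^{-t(n-d)} = e^{(d+1)\log n - t(n-d)}$. Setting this to $\delta$ and solving for $t$ yields the stated rate $\tfrac{1}{n-d}\bigl((d+1)\log n + \log(1/\delta)\bigr)$.

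For part (ii), the same enumeration applies, but now at most $\epsilon n$ of the $n$ labels may be misclassified by $h_I$, and these misclassifications can fall anywhere in the sample, so the empirical error on the $n-d$ held-out points is at most $\epsilon n/(n-d)$. For fixed $I$, Hoeffding's inequality gives $\P\bigl(\err(h_I) - \serr_{[n]\setminus I}(h_I) > t\bigr) \le e^{-2t^2(n-d)}$. Taking a union bound over the $n^{d+1}\cdot n \le n^{d+2}$ possible $(d,I)$ pairs (absorbing the choice of $d$ and a small slack) and setting the aggregate failure probability to $\delta$ yields $t = \sqrt{((d+2)\log n + \log(1/\delta))/(2(n-d))}$, which combined with the empirical-error bound $\epsilon n/(n-d)$ produces the claimed bound.

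I do not foresee a genuine obstacle, as this is the classical compression-scheme argument; the only delicate point is the book-keeping in the union bound to absorb the unknown compression size $d$ into the extra $\log n$ factor, and to make sure the conditioning on $I$ is handled correctly so that independence between $h_I$ and the $n-d$ held-out points is preserved. Since the result is directly quoted from \citet{DBLP:journals/ml/GraepelHS05}, one may in fact simply cite it, but the above sketch reconstructs it from first principles.
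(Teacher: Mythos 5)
The paper does not prove Theorem~\ref{thm:gen-slow}; it cites it directly from \citet{DBLP:journals/ml/GraepelHS05}, so there is no in-paper proof to compare against. Your reconstruction is the standard sample-compression argument and is essentially correct; it also mirrors the framework the paper itself uses to prove the related Theorem~\ref{thm:gen-fast}. The one place where your book-keeping is slightly imprecise is the union bound in part (ii): you describe the $n^{d+2}$ factor as coming from ``$(d,I)$ pairs'' plus ``a small slack,'' but the accounting in \citet{DBLP:journals/ml/GraepelHS05} (which the paper follows explicitly in the proof of Theorem~\ref{thm:gen-fast}) stratifies over the $n^2$ choices of the pair $(d,k)$ --- where $k$ is the number of misclassified points --- in addition to the $n^d$ choices of the compression index set $I$, giving $n^{d+2}$. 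In the pure Hoeffding case the $k$-stratification is actually superfluous (the tail bound does not depend on $\serr$), so $(d+2)\log n$ carries a little slack, and your count lands in the right place; but being explicit about the $(d,k)$ stratification would make the argument line up cleanly with the cited source and with the paper's own treatment of Theorem~\ref{thm:gen-fast}.
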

\begin{sloppypar}
The generalizing power of sample compression was independently 
discovered by \citet{warmuth86,MR1383093}, and later elaborated upon by 
\citet{DBLP:journals/ml/GraepelHS05}.
The bounds above are already quite usable, but they feature an abrupt transition from the
$(\log n)/n$ decay in the
lossless ($\eps=0$) regime to the 
$\sqrt{(\log n)/n}$ decay in the
lossy regime. We now provide a smooth interpolation between the two
(such results are known in the literature as ``fast rates'' \citep{CambridgeJournals:8212764}):
\end{sloppypar}

\begin{theorem}
\label{thm:gen-fast}
Fix a distribution over $\X\times\set{-1,1}$,
an $n\in\N$ and 
$0<\delta<1$.
With probability at least $1-\delta$ over the random sample 
$S
$ of size $n$, the following holds for all $0\le\eps\le\oo2$:
If
$S$ is 
{$(d,\eps)$-compressible}, then
\beqn
\label{eq:qdef}
\err(h_S) \le
\tips
+
\frac{2}{3(n-d)}\log\frac{n^{d+2}}\delta
+
\sqrt{ \frac{9\tips(1-\tips)}{2(n-d)}\log\frac{n^{d+2}}\delta}
=: Q(d,\eps)
,
\eeqn
where
$
\tips = \frac{\eps n}{n-d}.
$
\end{theorem}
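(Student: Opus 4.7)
The plan is to combine a Bernstein-type concentration inequality with a union bound over all possible compressed sub-samples, exploiting the defining property of sample compression: the induced hypothesis $h_I$ depends only on the chosen subset $I$ of the data.

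For each index set $I\subseteq[n]$, let $h_I$ denote the hypothesis the compression algorithm would produce from $\{Z_i:i\in I\}$, write $p_I=\err(h_I)$, and let $\hat p_I=m^{-1}\sum_{i\notin I}\pred{h_I(X_i)\neq Y_i}$ where $m:=n-|I|$. Since $S$ is iid, conditional on $\{Z_i\}_{i\in I}$ the hypothesis $h_I$ is deterministic and the held-out points $\{Z_i\}_{i\notin I}$ are iid draws from the underlying distribution; hence $m\hat p_I$ is conditionally $\bin(m,p_I)$. I would then apply Bernstein's inequality in Bernoulli form: with conditional probability at least $1-\delta_I$,
\begin{equation*}
p_I-\hat p_I \;\le\; \sqrt{\frac{2p_I(1-p_I)\log(1/\delta_I)}{m}} + \frac{2\log(1/\delta_I)}{3m}.
\end{equation*}
The elementary identity $p(1-p)\le\hat p(1-\hat p)+|p-\hat p|$, together with a short quadratic inversion in $t:=p_I-\hat p_I$ and repeated use of $\sqrt{a+b}\le\sqrt a+\sqrt b$, converts this into an empirical form with $\hat p_I(1-\hat p_I)$ under the square root; the numerical factor $9/2$ absorbs the slack introduced by this substitution.

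Next I would union-bound over all $I$ with per-subset confidence $\delta_I=\delta/n^{|I|+2}$. Since $\binom{n}{d}\le n^d$, the total failure probability is at most
\begin{equation*}
\sum_{d=0}^{n-1}\binom{n}{d}\frac{\delta}{n^{d+2}} \;\le\; \sum_{d=0}^{n-1}\frac{\delta}{n^2} \;\le\; \delta,
\end{equation*}
so on the resulting event the per-subset bound holds simultaneously for every $I\subseteq[n]$. Applying it to the specific $I$ returned by the algorithm (of size $d$), the $(d,\eps)$-compressibility guarantee forces $\hat p_I\le \eps n/m=\tips$. Monotonicity of $x\mapsto x(1-x)$ on $[0,1/2]$ (invoked under the assumption $\eps\le 1/2$) then lets us replace $\hat p_I(1-\hat p_I)$ by $\tips(1-\tips)$; substituting $\log(1/\delta_I)=\log(n^{d+2}/\delta)$ yields precisely $Q(d,\eps)$.

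The main obstacle is the constant-tracking in the Bernstein-to-empirical conversion: replacing the population variance $p_I(1-p_I)$ by its empirical analogue and then by $\tips(1-\tips)$ forces the somewhat loose numerical factors $9/2$ and $2/3$ in the final statement, and verifying these requires care in the quadratic solution and in monitoring which sub-cases (e.g.\ $\tips$ near $1/2$) require a separate treatment to keep the bound clean. A secondary subtlety is uniformity of the event: a single high-probability set must support \emph{every} triple $(I,d,\eps)$ simultaneously, which is precisely why the per-subset confidence is $\delta/n^{d+2}$ rather than $\delta/n^{d}$---the extra $n^2$ factor is what buys both the sum over $d$ and the union over subsets of each fixed size.
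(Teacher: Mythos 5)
Your proposal matches the paper's proof essentially step for step: apply an empirical Bernstein inequality to the held-out points (whose empirical error is conditionally Binomial given the compression set), then union-bound over the $n^d$ compression sets plus an extra $n^2$ for stratification over $d$ and $k$, yielding the $n^{d+2}$ inside the logarithm, and finally use $\hat p_I\le\tips$ together with monotonicity of $t\mapsto t(1-t)$ on $[0,\tfrac12]$. The one step you flag as the ``main obstacle'' --- converting Bernstein's population-variance form into the empirical form with the constants $2/3$ and $9/2$ --- the paper avoids entirely by citing exactly that inequality from \citet[Lemma 1]{DBLP:conf/icml/DasguptaH08}, so if you take that lemma as given your argument becomes fully rigorous and coincides with the paper's.
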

\begin{proof}
We closely follow the argument in
\citet[Theorem 2]{DBLP:journals/ml/GraepelHS05},
with the twist that instead of Hoeffding's inequality,
we use Bernstein's.
The particular form of the latter is due to
\citet[Lemma 1]{DBLP:conf/icml/DasguptaH08}:
if $\hat p\sim\bin(n,p)/n$ and $\delta>0$,
then
\beqn
\label{eq:emp-bern}
p \le \hat p + 
\frac{2}{3n}\log\frac{1}\delta
+
\sqrt{ \frac{9\hat p(1-\hat p)}{2n}\log\frac{1}\delta}
\eeqn
holds with probability at least $1-\delta$.

Now suppose that $S$ is $(d,k/n)$-compressible,
as witnessed by some sub-sample $\tilde S\subset S$ of size $d$.
In particular, the hypothesis $h_{\tilde S}
$
induced by the sub-sample $\tilde S$
makes $k$  
or fewer
mistakes on the $n-d$ points in $S\setminus\tilde S$.
Substituting $p=\err(h_{\tilde S})$ and
\beq
\hat p = \serr_{S\setminus\tilde S}(h_{\tilde S})
:=
\oo{|{S\setminus\tilde S}|}
\sum_{x\in S\setminus\tilde S} \pred{\text{$h_{\tilde S}$ makes a mistake on $x$}}
\le
\frac{k}{n-d}=\tips
\eeq
into (\ref{eq:emp-bern})
yields that 
for fixed $\tilde S$ and random $S\setminus\tilde S$,
with probability at least $1-\delta$,
\beqn
\err(h_{\tilde S})
&\le &
\serr_{S\setminus\tilde S}(h_{\tilde S})
+ 
\frac{2}{3(n-d)}\log\frac{1}\delta
\label{eq:err-serr}
+
\sqrt{ \frac{9\tips(1-\tips)}{2(n-d)}\log\frac{1}\delta},
\eeqn
where we used the monotonicity of $t\mapsto t(1-t)$ on $[0,\oo2]$.
To see that (\ref{eq:err-serr}) follows from (\ref{eq:emp-bern}),
note that
when $\tilde S$ of size $d$ is fixed and $S\setminus\tilde S$
is drawn iid $\sim \P$,
we have $(n-d)\serr_{S\setminus\tilde S}(h_{\tilde S})\sim\bin(n-d,\err(h_{\tilde S})).$
To make (\ref{eq:err-serr}) hold simultaneously for all $\tilde S\subseteq S$,
divide $\delta$ by $n^d$ --- the number of ways to choose a (multi)set $\tilde S$ of size $d$.
To make the claim hold for all $d\in[n]$ and all $0\le\eps<1$,
stratify (as in \citet[Lemma 1]{DBLP:journals/ml/GraepelHS05})
over the $n^2$ possible choices of $d$ and $k$, which amounts to dividing $\delta$
by an additional factor of $n^2$.

\end{proof}

\subsection{Margin-based nearest neighbor compression}

We now specialize the general sample compression result
of Theorem~\ref{thm:gen-fast} to our setting,
where $h_{S'}$ induced by a sub-sample $S'\subset S$ is given
by the $1$-NN classifier defined in (\ref{eq:1-nn-def}).
Any sample $S$ of size $n$ is trivially
$(n,0)$-compressible and $(0,\oo2)$-compressible
--- the former is achieved by not compressing at all,
and the latter by a constant predictor.
Now $d$ and $\eps$ cannot simultaneously be made arbitrarily small,
and for non-degenerate samples $S$,
the bound $Q$ in Theorem~\ref{thm:gen-fast}
will have a nontrivial minimal value $Q^*$.
Theorem~\ref{thm:hard} shows that computing $Q^*$ is intractable
and the
algorithm in Theorem~\ref{thm:main-alg} solves a tractable modification
of this problem. 
For $k\in\N$ and $\gamma>0$,
let us say that the sample $S$ is
$(k,\gamma)$-{\em separable}
if it admits a sub-sample
$S'\subset S$
such that
$|S\setminus S'|\le k$
and
$\marg(S')>\gamma$,
and observe that separability implies compressibility:
\begin{lemma}
If $S$ is $(k,\gamma)$-separable then it is 
$\paren{\mu(S)^{{\log_2(2\rad(S)/\gamma)}},
\frac{k}{|S|}}$-compressible.
\end{lemma}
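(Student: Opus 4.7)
The plan is to take the separating sub-sample promised by $(k,\gamma)$-separability, apply the net-extraction procedure of Lemma~\ref{lem:semi-net} to obtain a small sub-sub-sample, and then verify that the resulting $1$-NN classifier is correct on every point that was not discarded. More concretely, let $S' \subseteq S$ witness the $(k,\gamma)$-separability, so that $|S\setminus S'|\le k$ and $\marg(S')>\gamma$. Apply Lemma~\ref{lem:semi-net} to $S'$ with scale $r=\gamma$ to extract a $\gamma$-net $C\subseteq S'$ of size
$$|C| \le \mu(S')^{\log_2(2\rad(S')/\gamma)} \le \mu(S)^{\log_2(2\rad(S)/\gamma)},$$
where the last inequality uses the monotonicity $\mu(S')\le\mu(S)$ and $\rad(S')\le\rad(S)$ (which follow directly from the definitions: any packing witness inside $S'$ is also a packing witness inside $S$, and the smallest enclosing ball in $S'$ is no larger than that in $S$). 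This bounds the compression size $d$.

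Next, I would argue that the induced classifier $h_C$ of (\ref{eq:1-nn-def}) is perfectly consistent with the labels on $S'$. Fix any $x\in S'$. By the maximality of the $\gamma$-net, either $x\in C$ (in which case the nearest neighbor argument is trivial) or there exists $c\in C$ with $\dist(x,c)<\gamma$; otherwise $x$ could have been added to $C$ without violating the $\gamma$-separation requirement. Such a $c$ must carry the same label as $x$: if its label were opposite, then the pair $(x,c)$ would be an oppositely labeled pair in $S'$ at distance strictly less than $\gamma$, contradicting $\marg(S')>\gamma$. Let now $c^\star\in C$ be an actual nearest neighbor of $x$ in $C$; then $\dist(x,c^\star)\le\dist(x,c)<\gamma$, and by the same margin argument applied to $c^\star$, the label of $c^\star$ must also match that of $x$. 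Hence $h_C(x)$ agrees with the label of $x$.

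Consequently, $h_C$ can only err on points of $S\setminus S'$, of which there are at most $k$. Thus the number of mistakes is at most $k = \frac{k}{|S|}\cdot|S|$, so the loss parameter is $\eps = k/|S|$. Combined with the bound on $|C|$ above, $S$ is $\paren{\mu(S)^{\log_2(2\rad(S)/\gamma)},\,k/|S|}$-compressible, as claimed.

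I do not foresee a significant obstacle: the argument is a direct composition of the net-extraction bound from Lemma~\ref{lem:semi-net} with the margin-based consistency reasoning already used in the proof of Theorem~\ref{thm:consistent}. The only minor subtlety is to justify passing from $\mu(S'),\rad(S')$ to $\mu(S),\rad(S)$ in the size bound, which is immediate from the definitions since $S'\subseteq S$.
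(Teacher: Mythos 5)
Your proof follows essentially the same route as the paper's: take the witness $S'$ of $(k,\gamma)$-separability, charge all of $S\setminus S'$ to the loss (giving $\eps\le k/|S|$), extract a $\gamma$-net $C\subseteq S'$, verify via the margin that $h_C$ recovers the labels of $S'$, and bound $|C|$ by the semimetric packing estimate. The explicit nearest-neighbor/margin argument you give for the consistency of $h_C$ on $S'$ is correct and fills in a step the paper leaves implicit (``by construction'').

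However, one sub-step in your size bound is flawed: the monotonicity claim $\rad(S')\le\rad(S)$ is false in general. By definition $\rad(A)=\inf\set{r>0:\exists x\in A,\ A\subseteq B_r(x)}$, so the center of the enclosing ball must lie \emph{in} $A$; deleting points from $S$ can therefore \emph{increase} the radius. For instance, with $S=\set{-1,0,1}$ under the line metric and $S'=\set{-1,1}$, one has $\rad(S)=1$ (center $0$) but $\rad(S')=2$, so the chain $\mu(S')^{\log_2(2\rad(S')/\gamma)}\le\mu(S)^{\log_2(2\rad(S)/\gamma)}$ can fail. The remedy is simple and is what the paper does: rather than applying Lemma~\ref{lem:semi-net} to $S'$ and then invoking monotonicity, observe that $C\subseteq S'\subseteq S$ is a $\gamma$-separated subset of $S$ itself, so the packing bound of Lemma~\ref{lem:semi-pack} applied directly to $S$ yields $|C|\le\mu(S)^{\log_2(2\rad(S)/\gamma)}$, which is the bound stated. (Your claim $\mu(S')\le\mu(S)$ is fine, but becomes unnecessary once you argue this way.)
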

\bepf
Suppose $S'\subset S$ is a witness of 
$(k,\gamma)$-separability.
Being pessimistic, we will allow our lossy sample compression scheme
to mislabel all of $S\setminus S'$, but not any of $S'$, giving it a sample error
$\eps \le \frac{k}{|S|}$. Now by construction,
$S'$ is $(0,\gamma)$-separable, and thus a $\gamma$-net $\tilde S\subset S'$
suffices to recover the correct labels of $S'$ via $1$-nearest neighbor.
Lemma~\ref{lem:semi-pack} provides the estimate
$
|\tilde S| \le \mu(S)^{{\log_2(2\rad(S)/\gamma)}},
$
whence the compression bound.
\enpf

These observations culminate in an efficiently optimizable
margin-based generalization bound:

\begin{theorem}
\label{thm:gen-sep}
Fix a distribution over $\X$,
an $n\in\N$ and $0<\delta<1$.
With probability at least $1-\delta$ over the random sample 
$S
$ of size $n$, the following holds
for all $0\le k\le n/2$:
If $S$ is 
{$(k,\gamma)$-separable} with witness $S'$, then
$
\err(h_{S'}) \le
Q(d,k/n)
=: R(k,\gamma)
,
$
where
$Q$ is defined in (\ref{eq:qdef})
and
$d = \mu(S')^{{\log_2(2\rad(S')/\gamma)}}$.
Furthermore, the minimizer $(k^*,\gamma^*)$ of $R(\cdot,\cdot)$
is efficiently computable.
\end{theorem}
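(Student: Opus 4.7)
The plan is to stitch together three ingredients already in place. First, I would invoke the Lemma immediately preceding the theorem: if $S$ is $(k,\gamma)$-separable with witness $S'$, then $S$ is $(d,k/n)$-compressible with $d=\mu(S')^{\log_2(2\rad(S')/\gamma)}$. Concretely, extract a $\gamma$-net $\tilde S\subset S'$ (its size is bounded by $d$ via Lemma~\ref{lem:semi-pack} applied inside $S'$) and classify by the induced $1$-NN rule; since $\marg(S')>\gamma$ and $\tilde S$ is $\gamma$-dense in $S'$, this rule labels every point of $S'$ correctly, so it mislabels at most $|S\setminus S'|\le k$ points of $S$.

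Second, I would feed this $(d,k/n)$-compressibility into Theorem~\ref{thm:gen-fast}. Its bound $Q(d,\eps)$ already holds with probability at least $1-\delta$ simultaneously over all $(d,\eps)$ via the $n^2$-fold stratification built into its proof, so substituting $\eps=k/n$ and the explicit $d$ yields $\err(h_{S'})\le Q(d,k/n)=R(k,\gamma)$. Uniformity over $(k,\gamma)$ comes for free, because $R$ depends on these parameters only through the integer $d\in[n]$ and the integer $k\in[n]$, both of which are covered by the same stratification.

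Third, for the efficient computation of $(k^*,\gamma^*)$, I would appeal directly to Theorem~\ref{thm:main-alg}: the effective candidates for $\gamma$ reduce to the $O(n^2)$ distinct distances between oppositely labelled points of $S$, and for each such candidate the minimum $k$ of deletions required to enforce that margin is a bipartite vertex cover problem, solvable exactly via maximum matching in $O(|S|^{4.373})$ randomized time or within a factor $2$ greedily in $O(|S|^2\log|S|)$. One then evaluates $R$ at each resulting $(k,\gamma)$ pair and returns the minimizer. The main obstacle I anticipate is verifying that this discrete enumeration is lossless --- that no intermediate value of $\gamma$ could yield a strictly smaller $R$ --- but this is immediate from the piecewise-constant dependence of the feasibility set on $\gamma$ between consecutive pairwise distances.
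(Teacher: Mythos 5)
Your proposal is correct and reconstructs precisely the paper's (implicit) argument: the preceding lemma gives $(d,k/n)$-compressibility, Theorem~\ref{thm:gen-fast} supplies the bound uniformly over $d$ and $k$ via the built-in stratification, and Theorem~\ref{thm:main-alg} delivers the efficient optimization over the $O(n^2)$ candidate margins. You even tighten a small imprecision in the lemma by tracking $\mu(S')$ and $\rad(S')$ rather than $\mu(S)$ and $\rad(S)$, which is what the theorem statement actually requires.
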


\subsection{Sample complexity lower bound}
The following result shows
that 
even under margin assumptions, 
a sample of size exponential in $\dens$
will be required for some distributions.

\begin{theorem}
\label{thm:sample-lb}
For every semimetric space $(\X,\dist)$, there is
a distribution $\P$
such that 
$\err(f)=0$
for some ``target'' concept $f:\X\to\set{-1,1}$,
yet
for any learning algorithm mapping samples $S$ of size $n$
to hypotheses $h_n:\X\to\set{-1,1}$, we have,
with high probability,
$
\err(h_n) = \Omega\paren{\frac{ \sqrt{\mu(\X)^{{\log_2(2\rad(S)/\marg(S))}}}}{n}} 
.
$
\end{theorem}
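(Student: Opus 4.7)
The plan is to exhibit, for any given semimetric $(\X,\dist)$, a structured sub-semimetric together with a distribution $\P$ on it for which a no-free-lunch argument delivers the stated bound. I would proceed in three steps.

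\textbf{Step 1: Constructing a tight multi-scale packing.} The first step is to exhibit a subset $V\subseteq\X$ of size $M:=\mu(\X)^{\log_2(2\rad(S)/\marg(S))}$ whose points are pairwise at distance at least $\marg(S)$ and contained in a ball of radius $\rad(S)$. Since $\mu(\X)$ is defined via the maximal $\tfrac{r}{2}$-separated collection inside an $r$-ball, one can recursively descend: at each scale $r\in\set{\rad(S),\rad(S)/2,\ldots,\marg(S)}$, replace every currently-retained ball by up to $\mu$ sub-balls of half the radius whose centers are $\tfrac{r}{2}$-separated. After $\log_2(2\rad(S)/\marg(S))$ levels, the surviving centers form the desired $V$. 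The semimetric freedom exploited in Lemma~\ref{lem:semi-pack}(\ref{it:const-lambda-large-C}) is essential here: cross-cluster distances can be declared directly so as to certify the density bound at every scale and the pairwise $\marg(S)$-separation at the leaves simultaneously, without any triangle-inequality obstruction.

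\textbf{Step 2: A hard distribution and concept family.} With $V$ in hand, let $\P$ be the uniform distribution on $V$ and consider the family of $2^{|V|}$ concepts $\set{f_\sigma:\sigma\in\set{-1,+1}^V}$, where $f_\sigma(x)=\sigma_x$. Each such $f_\sigma$ satisfies $\err(f_\sigma)=0$ when labels are generated by $f_\sigma$ itself, meeting the realizability hypothesis of the theorem.

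\textbf{Step 3: Averaging over the adversary.} I would apply Yao's minimax principle: randomize the target uniformly over $\set{f_\sigma}$, so that the worst-case error of any deterministic learner on this random target lower-bounds the worst-case error of any (randomized) learner over this concept family. Under the uniform prior, every $x\in V$ not appearing in the $n$-sample $S$ has a label that is uniform on $\set{-1,+1}$ conditional on all observations, contributing expected error $\tfrac{1}{2}\cdot\tfrac{1}{M}$ to the test-time risk. A standard Chernoff/coupling estimate shows that with constant probability the number of unseen points is $\Omega(\max(M-n,0))$, which by an elementary case analysis on $n$ versus $M$ yields the claimed high-probability bound $\Omega(\sqrt{M}/n)$ (the statement being a unified form that degrades gracefully from $\Theta(1)$ in the regime $n\le\sqrt{M}$ to a non-trivial but smaller rate when $n$ exceeds $\sqrt{M}$).

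\textbf{Main obstacle.} I expect the technical crux to be Step 1: converting the single-scale quantity $\mu(\X)$ into a multi-scale packing of size $\mu^{\log_2(2\rad/\marg)}$. The definition of $\mu(\X)$ only delivers one well-separated witness inside one ball, so the recursion must instantiate that witness at every scale while keeping global radii, local densities, and leaf margins mutually compatible. This is available precisely because $\dist$ need not obey the triangle inequality, which is what uncouples inter-scale distance constraints and tightens Lemma~\ref{lem:semi-pack}(\ref{it:dens=>small-C}) into an equality in the adversarial instance.
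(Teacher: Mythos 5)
Your proposal takes a substantially different and ultimately flawed route. The paper's proof does \emph{not} build a multi-scale packing; it uses a single-scale witness of $k=\mu(\X)$ points that are nearly equidistant, with $1\le\dist(x_i,x_j)\le2$, places almost all the mass on one of them and a small mass $8\eps$ spread uniformly over the other $k-1$, and invokes the standard Ehrenfeucht et al.\ / ``no free lunch'' lower bound to get $\eps=\Omega(k/n)$. The $\sqrt{\cdot}$ in the statement is then not incidental: because the construction forces $\rad(S)\le2$ and $\marg(S)\ge1$ (modulo low-probability degeneracies), the random exponent $\log_2(2\rad(S)/\marg(S))$ is pinned between $1$ and $2$, so $\ell:=\mu^{\log_2(2\rad(S)/\marg(S))}$ satisfies $k\le\ell\le k^2$ and $\sqrt{\ell}\le k$, converting $\Omega(k/n)$ into the stated $\Omega(\sqrt{\ell}/n)$.

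There are several concrete problems with your approach. First, Step~1 is circular: $\rad(S)$ and $\marg(S)$ are random quantities determined only after the sample is drawn, so you cannot define $M=\mu(\X)^{\log_2(2\rad(S)/\marg(S))}$ and build a set $V$ of that size before constructing $\P$. Second, the theorem quantifies over \emph{every} semimetric $(\X,\dist)$; you may not ``declare distances directly'' as in Lemma~\ref{lem:semi-pack}(\ref{it:const-lambda-large-C}), which constructs a bespoke adversarial space. Third, the recursion in Step~1 does not follow from the definition of $\mu(\X)$: $\mu$ is a worst-case \emph{upper} bound on $\tfrac{r}{2}$-packings of $r$-balls, and its minimality only guarantees that \emph{some} ball at \emph{some} scale realizes a $\mu$-point packing --- precisely the one-shot witness the paper uses --- not that every sub-ball admits $\mu$ well-separated children at every smaller scale. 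Finally, Step~3's uniform-distribution/unseen-mass argument gives an expected error scaling like $e^{-n/M}$ (or your cruder $\max(M-n,0)/M$), which vanishes far too fast once $n$ exceeds $M$; it cannot recover the advertised $\Omega(\sqrt{M}/n)$ rate. The paper's non-uniform distribution, with a small tunable mass $8\eps$ on the rare points, is essential to getting a genuine $1/n$-type lower bound.
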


\begin{proof}
The definition of the density constant implies the existence
of $k=\mu(\X)=2^{\dens(\X)}$ nearly equidistant points $\set{x_i}$,
such that $1\le\dist(x_i,x_j)\le2$ for all $1\le i<j\le k$.
Following the standard VC lower bound argument 
\citep{MR1072253,Ehrenfeucht1989247},
we construct $\P$ by putting a mass of $1-8\eps$ on one of the $k$ points
and distributing the remaining mass uniformly over the other $k-1$ points.
The target $f:\set{x_i}\to\set{-1,1}$ is drawn uniformly at random
from among the $2^k$ choices,
so as to thwart any
learning algorithm.\hide{
Take $\F$ to be the collection of all 
concepts $g:\X\to\set{-1,1}$.
Then $|\F|=2^k$ and 
the VC-dimension of $\F$ is $k$.
The teacher will choose the target $f\in\F$ uniformly at random 
so as to thwart any
learning algorithm 
---
and in particular, this choice ensures that
indeed
$\marg(S)<\infty$
with high probability.
}
For fixed $0<\eps<\oo8$
and
$0<\delta<\oo{100}$,
this choice 
ensures
that 
a sample of size
$
\Omega\paren{\frac{k}{\eps}}
$
is required in order to produce an $\eps$-accurate hypothesis with $\delta$-confidence.
Inverting for $\eps=\err(h_n)$ will yield the claim
--- as soon as 
$k$ and 
$\ell:=\mu(\X)^{{\log_2(2\rad(S)/\marg(S))}}$
can be tied together.

By construction, $0<\marg(S)\le\rad(S)<\infty$,
except for two possible degenerate cases:
(a) $\rad(S)=0$ and (b) $\marg(S)=\infty$.
Case (a) occurs when
$S$ consists of a single point,
with probability decaying as $e^{-8\eps n}$.
Case (b) occurs when $f$ assigns the same label to all $k$
points, with probability $2^{-k+1}$.
Thus, with overwhelming probability,
$\log_2(2\rad(S)/\marg(S))\ge1$.
Since $\rad(S)\le2$, by construction,
we also have
$\log_2(2\rad(S)/\marg(S))\le2$. 
It follows that $k\le\ell\le k^2$,
which yields the claim.
\end{proof}

\section*{Acknowledgements}
We thank Daniel Hsu for communicating to us the version of Bernstein's inequality
appearing in (\ref{eq:emp-bern}). We thank Roi Weiss for helpful comments on the manuscript.

\bibliographystyle{plainnat} 
\bibliography{../../mybib}

\newpage

\appendix
\section{Figures and deferred proofs}

\subsection*{Figure accompanying the definition:
Sub-sample, margin, and induced $1$-NN.}

\tikzset{
    plusS/.style = {  },
    plustS/.style = {double = black!30, double distance=1pt},
    minusS/.style = {},
    minustS/.style = {double = black!30, double distance=1pt}
}
\begin{figure}[H]
\centering
\begin{tikzpicture}[scale = 1,
every node/.style = {draw, shape = circle, minimum size = 2.1mm,inner sep = 0mm, fill = black!10}
]
\node  [plusS] (p1) at (-1.1,-.2) {$+$};
\node  [minusS] (m1) at (-1,-1) {$-$};
\node  [plustS] (p2) at (.0, .5) {$+$};
\node  [minustS] (m2) at (-.3,-1.5) {$-$};
\node  [plustS] (ps) at (-1.7,.6) {$+$};
\node  [plustS] (ps3) at (1.1,.8) {$+$};
\node  [minustS] (mts) at (1.3,-1.4) {$-$};
\node  [minusS] (ms) at (.7,0) {$-$};
\node  [minustS] (ms) at (-2,-1.5) {$-$};
\draw  [dashed] (m1) -- (p1) node [left=7pt, midway, draw=none, fill = none] {$\marg(S)$};
\draw  [dashed](p2) -- (m2) node [above right=18pt,draw=none, fill = none] {$\marg(\tilde S)$};
\end{tikzpicture}
\caption{In this example, the sub-sample $\tilde S\subset S$ is indicated by double circles.
It is always the case that
$\marg(\tilde S)\ge\marg(S)$.}%
\label{fig:marg}%
\end{figure}
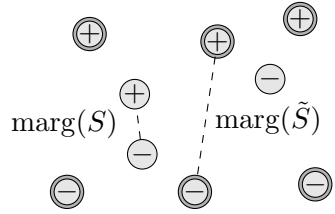

\subsection*{Algorithm accompanying Lemma~\ref{lem:semi-net}}

\begin{algorithm}[H]
   \caption{Brute-force net construction}
   \label{alg:bf}
\begin{algorithmic}
 \REQUIRE sample $S$, margin $r$
 \ENSURE $C$ is an $r$-net for $S$
 \FOR{$x \in S$}
    \IF{$\dist(x,C) \ge r$}
        \STATE $C = C \cup \{x\}$
    \ENDIF
 \ENDFOR
\end{algorithmic}
\end{algorithm}

\subsection*{Proof of Lemma~\ref{lem:metric-nns}}

\begin{proof}
To prove 
(\ref{it:metric-nns-1}),
let $S$ be a set of points obeying the
line metric, i.e.\ the distance between $s_i,s_j \in S$ is
$|i-j|$. Suppose $x$ is at distance $n=|S|$ from $s_i$, and at distance
$n+1$ from all other points of $S$. Then $s_i$ can be any point of $S$, 
and cannot be located without inspecting each point.
The 
claim 
in
(\ref{it:metric-nns-2})
is the result of \citet{KL04}.
\end{proof}

\end{document}